\documentclass[11pt]{article}


\setlength{\textwidth}{6.5 in}
\setlength{\textheight}{9in}
\setlength{\oddsidemargin}{0in}
\setlength{\topmargin}{0in}
\addtolength{\voffset}{-.5in}

\usepackage[all=normal,floats, paragraphs]{savetrees}
\usepackage[square, comma, sort&compress]{natbib}

\usepackage{amsmath,amssymb,amsthm,amsfonts,latexsym,xspace,enumerate,graphicx,color,eucal,upgreek,mathtools}
\usepackage[backref,colorlinks,bookmarks=true]{hyperref}
\usepackage{float}

\setlength{\floatsep}{0pt}

\usepackage[table]{xcolor}


\newtheorem{theorem}{Theorem}[section]
\newtheorem*{namedtheorem}{\theoremname}
\newcommand{\theoremname}{testing}

\newtheorem{lemma}[theorem]{Lemma}

\newtheorem{claim}[theorem]{Claim}

\newtheorem*{question*}{Question}

\theoremstyle{definition}
\newtheorem{definition}[theorem]{Definition}

\theoremstyle{plain}
\newtheorem{Alg}{Algorithm}

\newcommand{\dis}{\mbox{dis}}

\renewenvironment{proof}{\noindent{\textbf{Proof:}}} {$\blacksquare$\vskip \belowdisplayskip}

\newcommand{\ignore}[1]{}


\newcommand{\E}{\mathop{\bf E\/}}



\DeclareMathOperator{\spn}{span}

\newcommand{\R}{\mathbb R}

\newcommand{\eps}{\epsilon}





\newcommand{\norm}[1]{\left\lVert #1 \right\rVert}

\makeatletter
\floatstyle{ruled}
\newfloat{fragment}{H}{lop}
\floatname{fragment}{Algorithm}
\renewcommand{\floatc@ruled}[2]{\vspace{2pt}{\@fs@cfont #1.\:} #2 \par
 \vspace{1pt}}
\setlength{\@fptop}{0pt}
\makeatother

\newcommand{\mypseudocodelabel}[1]{\hfil}


\usepackage{algpseudocode}
\usepackage{algorithm}
\newcommand{\prob}{p}

\newcommand{\argmin}{\operatornamewithlimits{argmin}}

\title{A Practical Algorithm for Topic Modeling with Provable Guarantees}

\begin{document} 

\title{A Practical Algorithm for Topic Modeling with Provable Guarantees}
\author{Sanjeev Arora \thanks{Princeton University, Computer Science Department and Center for Computational Intractability.
Email: {\tt arora@cs.princeton.edu}. This work is supported by the NSF grants CCF-0832797 and CCF-1117309.} \and
Rong Ge \thanks{Princeton University, Computer Science Department and Center for Computational Intractability.
Email: {\tt rongge@cs.princeton.edu}. This work is supported by the NSF grants CCF-0832797 and CCF-1117309.} \and
Yoni Halpern \thanks{New York University, Computer Science Department.
Email: {\tt halpern@cs.nyu.edu}. Research supported in part by an
NSERC Postgraduate Scholarship.} \and
David Mimno \thanks{Princeton University, Computer Science Department.
Email: {\tt mimno@cs.princeton.edu}. This work is supported by an NSF Computing and Innovation Fellowship.} \and
Ankur Moitra \thanks{Institute for Advanced Study, School of Mathematics.
Email: {\tt moitra@ias.edu}.
Research supported in part 
by NSF grant No.
DMS-0835373 and by an NSF Computing and Innovation Fellowship.} \and
David Sontag \thanks{New York University, Computer Science Department.
Email: {\tt dsontag@cs.nyu.edu}. Research supported by a Google Faculty Research Award, CIMIT award 12-1262, and grant UL1 TR000038 from NCATS, NIH.} \and
Yichen Wu \thanks{Princeton University, Computer Science Department.
Email: {\tt yichenwu@princeton.edu}.} \and
Michael Zhu \thanks{Princeton University, Computer Science Department.
Email: {\tt mhzhu@princeton.edu}.} 
}

\maketitle

\begin{abstract} 
Topic models provide a useful method for dimensionality reduction and exploratory data analysis in large text corpora. Most approaches to topic model inference have been based on a maximum likelihood objective. Efficient algorithms exist that approximate this objective, but they have no provable guarantees. Recently, algorithms have been introduced that provide provable bounds, but these algorithms are not practical because they are inefficient and not robust to violations of model assumptions. In this paper we present an algorithm for topic model inference that is both provable and practical. The algorithm produces results comparable to the best MCMC implementations while running orders of magnitude faster.
\end{abstract}

\section{Introduction}\label{sec:intro}

Topic modeling is a popular method that learns
thematic structure from large document collections without human supervision. 
The model is simple: documents are mixtures of topics, which are modeled as distributions over a vocabulary \citep{survey}.
Each word token is generated by selecting a topic from a document-specific distribution, and then selecting a specific word from that topic-specific distribution.
Posterior inference over document-topic and topic-word distributions is intractable --- in the worst case it is NP-hard even for just two topics~\citep{AGM}.
As a result, researchers have used approximate inference techniques such as 
singular value decomposition \citep{LSI}, variational inference \citep{LDA}, and MCMC \citep{griffithsFinding}.

Recent work in theoretical computer science focuses on designing {\em provably} efficient
algorithms for topic modeling. These treat the topic modeling problem
as one of {\em statistical recovery}: assuming the data was generated 
{\em perfectly} from the hypothesized model using an unknown set of
parameter values, the goal is to 
recover the model parameters in polynomial time given a reasonable number of samples. 

\citet{AGM} present an algorithm that provably
recovers the parameters of topic models 
provided that the topics meet a certain {\em separability} assumption \citep{DS}.
Separability requires that every topic contains at least one {\em anchor word} that has non-zero probability only in that topic. If a document contains this anchor word, then it is guaranteed that the corresponding topic is among the set of topics used to generate the document. 
The algorithm proceeds in two steps: first it selects anchor words for each topic; and second, in the recovery step, it reconstructs topic distributions given those anchor words.
The input for the algorithm is the second-order moment matrix of word-word co-occurrences.

\citet{AFHKL} present a provable algorithm based on third-order moments that does not require separability, but, unlike the algorithm of Arora et al., assumes that topics are not correlated.
Although standard topic models like LDA \citep{LDA} assume that the choice of topics used to generate the document are uncorrelated, there is strong evidence that topics are dependent~\citep{BL1,LM}: economics and politics are more likely to co-occur than economics and cooking.

Both algorithms run in polynomial time, but the bounds that have been proven on their sample complexity are weak and their empirical runtime performance is slow.
The algorithm presented by \citet{AGM} solves numerous linear programs to find anchor words.
\citet{BRRT} and \citet{Gillis} reduce the number of linear programs needed. 
All of these algorithms infer topics given anchor words using matrix inversion, which is notoriously unstable and noisy: 
matrix inversion frequently generates negative values for topic-word probabilities.

In this paper we present three contributions.
First, we replace linear programming with a combinatorial anchor selection algorithm.
So long as the separability assumption holds, we prove that this algorithm is stable in the presence of noise and thus has polynomial sample complexity for learning topic models. 
Second, we present a simple probabilistic interpretation
of topic recovery given anchor words that replaces matrix inversion with a new gradient-based inference method.
Third, we present an empirical comparison between recovery-based algorithms and existing likelihood-based topic inference. We study both the empirical sample complexity of the algorithms on synthetic distributions and the performance of the algorithms on real-world document corpora. We find that our algorithm performs as well as collapsed Gibbs sampling on a variety of metrics, and runs at least an order of magnitude faster.

Our algorithm both inherits the provable guarantees from \citet{AGKM, AGM} and also results in simple, practical implementations.
We view our work as a step toward bridging the gap between statistical recovery approaches to machine learning and maximum likelihood estimation, allowing us to circumvent the computational intractability of maximum likelihood estimation yet still be robust to model error.

\section{Background}
\label{sec:background}



We consider the learning problem for a class of admixture distributions that are frequently used for probabilistic topic models. Examples of such distributions include latent Dirichlet allocation \citep{LDA}, correlated topic models \citep{BL1}, and Pachinko allocation \citep{LM}. We denote the number of words in the vocabulary by $V$ and the number of topics by $K$. Associated with each topic $k$ is a multinomial distribution over the words in the vocabulary, which we will denote as the column vector $A_k$ of length $V$. Each of these topic models postulates a particular prior distribution $\tau$ over the topic distribution of a document. For example, in latent Dirichlet allocation (LDA) $\tau$ is a Dirichlet distribution, and for the correlated topic model $\tau$ is a logistic Normal distribution.
The generative process for a document $d$ begins by drawing the document's topic distribution $W_d \sim  \tau$. Then, for each position $i$ we sample a topic assignment $z_i \sim W_d$, and finally a word $w_i \sim A_{z_i}$.


We can combine the column vectors $A_k$ for each of the $K$ topics to obtain the word-topic matrix $A$ of dimension $V \times K$.
We can similarly combine the column vectors $W_d$ for $M$ documents to obtain the topic-document matrix $W$ of dimension $K \times M$. We emphasize that $W$ is unknown and stochastically generated: we can never expect to be able to recover it. The learning task that we consider is to find the word-topic matrix $A$. For the case when $\tau$ is Dirichlet (LDA), we also show how to learn hyperparameters of $\tau$.


Maximum likelihood estimation of the word-topic distributions is NP-hard even for two topics \citep{AGM}, and as a result researchers typically use approximate inference. The most popular approaches are variational inference \citep{LDA}, which optimizes an approximate objective, and Markov chain Monte Carlo \citep{MALLET}, which asymptotically samples from the posterior distribution but has no guarantees of convergence.

\citet{AGM} present an algorithm that provably learns the parameters of a topic model given samples from the model, provided that the word-topic distributions satisfy a condition called {\em separability}:
\begin{definition}
The word-topic matrix $A$ is $p$-separable for $p > 0$ if for each topic $k$, there is some word $i$ such that $A_{i,k}\geq p$ and $A_{i,k'}=0$ for $k'\neq k$. 
\end{definition}
Such a word is called an {\em anchor word} because when it occurs in a document, it is a perfect indicator that the document is at least partially about the corresponding topic, since there is no other topic that could have generated the word. 
Suppose that each document is of length $D \geq 2$, and let $R = \mathbb{E}_\tau[W W^T]$ be the $K\times K$ topic-topic covariance matrix.
Let $\alpha_k$ be the expected proportion of topic $k$ in a document generated according to $\tau$. 
The main result of \citet{AGM} is:
\begin{theorem}
There is a polynomial time algorithm that learns the parameters of a topic model
if the number of documents is at least 
$$M = \max\left\{O\left(\frac{\log V \cdot a^4K^6}{\epsilon^2p^6\gamma^2 D}\right), O\left(\frac{\log K \cdot a^2 K^4}{\gamma^2}\right)\right\},$$
where $p$ is defined above, $\gamma$ is the condition number of $R$, and $a=\max_{k, k'} \alpha_k/\alpha_{k'}$.
The algorithm learns the word-topic matrix $A$ and the topic-topic covariance matrix $R$ up to additive error $\epsilon$.
\end{theorem}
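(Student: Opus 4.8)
\medskip
\noindent\textbf{Proof sketch (proposal).}
The plan is to route everything through the $V\times V$ word co-occurrence matrix $Q$, where $Q_{i,j}$ is the probability that the first two tokens of a random document are words $i$ and $j$. Conditioned on the document's topic vector $W_d\sim\tau$ the two tokens are drawn independently from $AW_d$, so $Q=\mathbb{E}_\tau[(AW_d)(AW_d)^{T}]=A\,\mathbb{E}_\tau[W_dW_d^{T}]\,A^{T}=ARA^{T}$. Row-normalize: with $\overline{Q}_i=Q_i/\sum_j Q_{i,j}$ and using $(R\mathbf{1})_k=\mathbb{E}_\tau[W_{d,k}]=\alpha_k$, Bayes' rule gives $\overline{Q}_i=\sum_k \Pr[z=k\mid w=i]\,\overline{Q}_{a_k}$, where $a_k$ is an anchor word for topic $k$. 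Hence $\overline{Q}_{a_1},\dots,\overline{Q}_{a_K}$ are the vertices of a $(K-1)$-simplex that contains every normalized row, so ``identify the anchor words'' becomes ``find the vertices'', and ``recover $A$'' becomes: for each word $i$, write $\overline{Q}_i$ as a convex combination of the vertices to read off the posteriors $C_{i,k}=\Pr[z=k\mid w=i]$, then apply one more Bayes inversion $A_{i,k}=C_{i,k}\Pr[w=i]/\alpha_k$ with $\Pr[w=i]=\sum_j Q_{i,j}$ and $\alpha_k=\sum_i C_{i,k}\Pr[w=i]$; finally $R$ comes from $Q=ARA^{T}$ (e.g.\ $R=A^{+}Q(A^{+})^{T}$).

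First I would establish concentration. Form $\widetilde Q$ by averaging over the $M$ documents an unbiased per-document estimator built from the $\binom{D}{2}$ token pairs (such as $\frac{1}{D(D-1)}(c_dc_d^{T}-\mathrm{diag}(c_d))$ for the word-count vector $c_d$). Every entry of $Q$ is at most $1$, so Bernstein plus a union bound over the at most $V^{2}$ entries gives that $\widetilde Q$ is entrywise within $\tau$ of $Q$ with probability $1-o(1)$ once $M\gtrsim \frac{\log V}{D\,\tau^{2}}$ --- the $\log V$ from the union bound, the $1/D$ because longer documents supply more pairs. (It helps to first project the rows of $\widetilde Q$ onto their top-$K$ singular subspace for numerical stability.) The two regimes in the stated $\max$ reflect two distinct downstream accuracies: a constant-accuracy $\widetilde Q$ already lets one correctly select the anchor words and form a well-conditioned estimate of the $K\times K$ substructure --- the $\epsilon$-free term, carrying $\log K$ and $\gamma^{-2}$ --- whereas recovering $A$ to additive $\epsilon$ forces $\tau$ small, which is the $1/\epsilon^{2}$ term.

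Next I would prove two robustness lemmas that push the $O(\tau)$ error through the two steps. \emph{(i) Stable anchor finding.} Since $R$ has condition number $\gamma$ and $A$ is $p$-separable, the simplex of normalized rows is ``fat'': each vertex lies $\Omega(\mathrm{poly}(p,\gamma))$ away from the affine hull of the others. So the greedy Gram--Schmidt-style combinatorial anchor selector, run on $\overline{\widetilde Q}$, returns rows within $\mathrm{poly}(1/p,1/\gamma)\cdot\tau$ of the true anchor rows. A subtlety is that normalization divides by $\Pr[w=i]$, which may be tiny; but an anchor word has $\Pr[w=a_k]=A_{a_k,k}\alpha_k\ge p\,\alpha_k$, and words of negligible marginal can carry only negligible mass in $A$, so they are discarded harmlessly. \emph{(ii) Stable recovery.} Expressing $\overline{\widetilde Q}_i$ as a convex combination of the estimated vertices is a well-conditioned constrained least-squares problem precisely because the simplex is fat, so the recovered $\widetilde C_{i,\cdot}$ is close to $C_{i,\cdot}$; the closing Bayes inversion multiplies the error by at most $\max_k 1/\alpha_k$, which is where $a=\max_{k,k'}\alpha_k/\alpha_{k'}$ enters.

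The main obstacle is the quantitative error accounting in (ii): one must show the blow-ups --- from renormalizing by small row sums, from inverting the near-singular simplex (conditioning $\approx 1/\gamma$), and from dividing by the $\alpha_k$ --- compound only polynomially, giving a stability bound of the shape $\|\widetilde A-A\|_\infty\lesssim \frac{a^{2}K^{3}}{p^{3}\gamma}\,\tau$; combining this with $M\gtrsim\frac{\log V}{D\tau^{2}}$ and solving for $M$ at target error $\epsilon$ reproduces the exponents $p^{6},K^{6},a^{4},\gamma^{2}$. A secondary obstacle is making the geometric stability of the combinatorial anchor finder genuinely quantitative --- not just ``vertices are isolated'' --- so that it survives renormalization and feeds a clean bound into (ii).
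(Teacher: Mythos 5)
Your overall route is the right one, and it is essentially the strategy this paper itself uses (in its appendix) to prove the directly analogous sample-complexity theorem for FastAnchorWords combined with RecoverL2: entrywise concentration of the empirical co-occurrence matrix with $M\gtrsim \log V/(D\epsilon_Q^2)$, a robustness guarantee for the greedy anchor finder on the $\gamma p$-robust simplex of normalized rows, a perturbation lemma showing the convex-combination coefficients $C_{i,k}=\Pr[z=k\mid w=i]$ move by only $O((\delta_1+\delta_2)/\gamma)$, and a final Bayes inversion whose error is amplified by $1/\alpha_k\le aK$, which is where $a$ enters. Two caveats. First, the theorem you are proving is quoted from \citet{AGM}, whose algorithm finds anchors by solving $V$ linear programs and recovers $A$ by matrix inversion; this paper does not reprove that exact bound but instead establishes a variant with different exponents, namely $\max\{O(aK^3\log V/D(\gamma p)^6\epsilon),\,O((aK)^3\log V/D\epsilon^3(\gamma p)^4)\}$. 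Since the statement is existential (``there is a polynomial time algorithm''), your route suffices in spirit, but your claimed stability bound $\|\widetilde A-A\|_\infty\lesssim a^2K^3\tau/(p^3\gamma)$ is asserted rather than derived, and it is precisely this accounting that separates the quoted exponents from the ones this paper actually obtains.

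Second, and more concretely: your explanation of the second term $O(\log K\cdot a^2K^4/\gamma^2)$ does not work. You attribute it to getting $\widetilde Q$ to constant accuracy, but that would produce a term carrying $\log V$ (from the union bound over the entries or rows of a $V\times V$ matrix) and a factor of $1/D$; the second term has neither. It comes from a different layer of randomness: the deviation of the empirical topic-topic matrix $\frac1M\sum_d W_dW_d^T$ from its expectation $R=\mathbb{E}_\tau[WW^T]$. Each document contributes exactly one sample of $W_d$ no matter how long it is, so this error does not shrink with $D$, and only $K^2$ entries must concentrate, hence $\log K$. Your sketch treats $Q$ as $\mathbb{E}_\tau[(AW)(AW)^T]$ from the outset and never separates the two sources of sampling error; the paper's appendix is careful to condition on the realized $W_d$'s first (defining the target as $\frac1M\sum_d AW_dW_d^TA^T$) and to control the gap between $\frac1M\sum_d W_dW_d^T$ and $R$ separately, which is exactly what generates the second, $D$-free term in the $\max$.
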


Unfortunately, this algorithm is not practical. Its running time is prohibitively large because it solves $V$ linear programs, and its use of matrix inversion makes it unstable and sensitive to noise. In this paper, we will give various reformulations and modifications of this algorithm that alleviate these problems altogether.

\section{A Probabilistic Approach to Exploiting Separability}
\label{sect:probabilistic}

\begin{algorithm}[t]
\caption{High Level Algorithm}
\label{alg:high_level}    
\begin{algorithmic}
\Require Textual corpus $\mathcal{D}$, Number of anchors $K$, Tolerance parameters $\epsilon_a,  \epsilon_b > 0$. 
\Ensure Word-topic matrix $A$, topic-topic matrix $R$
\State $Q \gets \text{Word Co-occurences}(\mathcal{D})$
\State Form $\{\bar{Q}_1, \bar{Q}_2, ... \bar{Q}_V\}$, the normalized rows of $Q$. 
\State $\mathbf{S}$ $\gets$ FastAnchorWords($\{\bar{Q}_1, \bar{Q}_2, ... \bar{Q}_V\}$, $K$, $\epsilon_a$) (Algorithm~\ref{alg:anchorword})
\State $A, R \gets$ RecoverKL($Q, \mathbf{S}, \epsilon_b$) (Algorithm~\ref{alg:NNR})
\State \Return $A, R$
\end{algorithmic}
\end{algorithm}

The \citet{AGM} algorithm has two steps: {\em anchor selection}, which identifies anchor words, and {\em recovery}, which recovers the parameters of $A$ and of $\tau$. Both anchor selection and recovery take as input the matrix $Q$ (of size $V \times V$) of word-word co-occurrence counts, whose construction is described in the supplementary material. $Q$ is normalized so that the sum of all entries is $1$.
The high-level flow of our complete learning algorithm is described in Algorithm~\ref{alg:high_level}, and follows the same two steps. In this section we will introduce a new recovery method based on a probabilistic framework.
We defer the discussion of anchor selection to the next section, where we provide a purely combinatorial algorithm for finding the anchor words. 

The original recovery procedure (which we call ``Recover'') from \citet{AGM} is as follows.
First, it permutes the $Q$ matrix so that the first $K$ rows and columns correspond to the anchor words. We will use the notation $Q_{\mathbf{S}}$ to refer to the first $K$ rows, and $Q_{\mathbf{S}, \mathbf{S}}$ for the first $K$ rows and just the first $K$ columns. If constructed from infinitely many documents, $Q$ would be the second-order moment matrix $Q = \mathbb{E}[AWW^TA^T] = A\mathbb{E}[WW^T]A^T = ARA^T$, with the following block structure:
\begin{equation}
Q = ARA^T = \begin{pmatrix} D\\U \end{pmatrix} R \begin{pmatrix} D & U^T \end{pmatrix} = 
\begin{pmatrix} 
DRD & DRU^T\\
URD & URU^T \end{pmatrix} \nonumber
\end{equation}
where $D$ is a diagonal matrix of size $K \times K$. Next, it solves for $A$ and $R$ using the algebraic manipulations outlined in Algorithm~\ref{alg:original_recovery}. 

\begin{algorithm}[t]
\caption{Original Recover \citep{AGM}}
\label{alg:original_recovery}
\begin{algorithmic}
\Require Matrix $Q$, Set of anchor words $\mathbf{S}$ 
\Ensure Matrices $A$,$R$
\State Permute rows and columns of $Q$
\State Compute $\vec{p}_{\mathbf{S}} = Q_{\mathbf{S}} \vec{1}$ \hfill (equals $DR\vec{1}$)
\State Solve for $\vec{z}$: $Q_{\mathbf{S},\mathbf{S}}\vec{z} = \vec{p}_{\mathbf{S}}$ \hfill ($\text{Diag}(\vec{z})$ equals $D^{-1}$)
\State Solve for $A^T$ = $(Q_{\mathbf{S},\mathbf{S}}\text{Diag}(\vec{z}))^{-1}Q_{\mathbf{S}}^T$
\State Solve for $R = \text{Diag}(\vec{z})Q_{\mathbf{S},\mathbf{S}}\text{Diag}(\vec{z})$ 
\State \Return $A, R$
\end{algorithmic}
\end{algorithm}


The use of matrix inversion in Algorithm~\ref{alg:original_recovery} results in substantial imprecision in the estimates when we have small sample sizes. The returned $A$ and $R$ matrices can even contain small negative values, requiring a subsequent projection onto the simplex. As we will show in Section~\ref{sec:results}, the original recovery method performs poorly relative to a likelihood-based algorithm. Part of the problem is that the original recover algorithm uses only $K$ rows of the matrix $Q$ (the rows for the anchor words), whereas $Q$ is of dimension $V \times V$. Besides ignoring most of the data, this has the additional complication that it relies completely on co-occurrences between a word and the anchors, and this estimate may be inaccurate if both words occur infrequently.


Here we adopt a new {\em probabilistic} approach, which we describe below after introducing some notation. 
Consider any two words in a document and call them $w_1$ and $w_2$, and let $z_1$ and $z_2$ refer to their topic assignments.
We will use $A_{i,k}$ to index the matrix of word-topic distributions, i.e. $A_{i,k} = \prob(w_1 = i | z_1 = k) = \prob(w_2 = i | z_2 = k)$. Given infinite data, the elements of the $Q$ matrix can be interpreted as $Q_{i,j} = \prob(w_1 = i, w_2 = j)$. The row-normalized $Q$ matrix, denoted $\bar{Q}$, which plays a role in both finding the anchor words and the recovery step, can be interpreted as a conditional probability $\bar{Q}_{i,j} = \prob(w_2 = j | w_1 = i)$.
\begin{algorithm}[t]
\caption{RecoverKL}
\label{alg:NNR}    
\begin{algorithmic}
\Require Matrix $Q$, Set of anchor words $\mathbf{S}$, tolerance parameter $\epsilon$.  
\Ensure Matrices $A$,$R$
\State Normalize the rows of $Q$ to form $\bar{Q}$.
\State Store the normalization constants $\vec{p}_w = Q\vec{1}$.
\State $\bar{Q}_{s_k}$ is the row of $\bar{Q}$ for the $k^{th}$ anchor word. 
\For{$i = 1, ..., V$}
\State Solve $C_{i \cdot} = \argmin_{\vec{C}_i} D_{KL}(\bar{Q}_{i} || \sum_{k\in \mathbf{S}} C_{i,k} \bar{Q}_{s_k})$
\State Subject to: $\sum_k C_{i,k} = 1$ and $C_{i,k} \geq 0$
\State With tolerance: $\epsilon$
\EndFor
\State $A^\prime = \text{diag}(\vec{p}_w)C$
\State Normalize the columns of $A^\prime$ to form $A$. 
\State $R = A^\dagger Q {A^\dagger}^T$
\State \Return $A, R$
\end{algorithmic}
\end{algorithm}

Denoting the indices of the anchor words as $\mathbf{S} = \{s_1, s_2, ..., s_K\}$, the rows indexed by elements of $\mathbf{S}$ are special in that every other row of $\bar{Q}$ lies in the convex hull of the rows indexed by the anchor words. To see this, first note that for an anchor word $s_k$,
\begin{eqnarray}
\hspace{-4mm}\bar{Q}_{s_k, j}\hspace{-2mm}&=&\hspace{-3mm} \sum_{k^\prime} \prob(z_1 = k^\prime | w_1 = s_k) \prob(w_2 = j | z_1=k^\prime )\label{eq:anchor_step1}\\
\hspace{-6mm}&=&\hspace{-3mm} \prob(w_2 = j | z_1=k),\label{eq:anchor_step2}
\end{eqnarray}
where \eqref{eq:anchor_step1} uses the fact that in an admixture model $w_2 \bot w_1 \mid z_1$, and \eqref{eq:anchor_step2} is because $\prob(z_1 = k | w_1 = s_k) = 1$. For any other word $i$, we have
\begin{eqnarray*}
\bar{Q}_{i, j} &=& \sum_{k} \prob(z_1 = k| w_1=i)\prob(w_2 = j | z_1=k).
\end{eqnarray*}
Denoting the probability $\prob(z_1 = k | w_1 = i)$ as $C_{i,k}$, we have $\bar{Q}_{i, j} = \sum_k C_{i,k} \bar{Q}_{s_k, j}$. Since $C$ is non-negative and $\sum_k C_{i,k} = 1$, we have that any row of $\bar{Q}$ lies in the convex hull of the rows corresponding to the anchor words. The mixing weights give us $\prob(z_1 | w_1 = i)$! Using this together with $\prob(w_1 = i)$, we can recover the $A$ matrix simply by using Bayes' rule:
$$
\prob(w_1 = i | z_1 = k) = \frac{\prob(z_1 = k | w_1 = i)\prob(w_1 = i)}{\sum_{i^\prime} \prob(z_1 = k | w_1 = i^\prime)p(w_1 = i^\prime) }.
$$
Finally, we observe that $\prob(w_1 = i)$ is easy to solve for since $\sum_j Q_{i,j} = \sum_j p(w_1 = i, w_2 = j) = \prob(w_1 = i)$. 

Our new algorithm finds, for each row of the empirical row normalized co-occurrence matrix, $\hat{Q}_i$, the coefficients $\prob(z_1 | w_1 = i)$ that best reconstruct it as a convex combination of the rows that correspond to anchor words. This step can be solved quickly and in parallel (independently) for each word using the exponentiated gradient algorithm. Once we have $\prob(z_1 | w_1)$, we recover the $A$ matrix using Bayes' rule. The full algorithm using KL divergence as an objective is found in Algorithm~\ref{alg:NNR}. Further details of the exponentiated gradient algorithm are given in the supplementary material.


One reason to use KL divergence as the measure of reconstruction error is that the recovery procedure can then be understood as maximum likelihood estimation.
In particular, we seek the parameters $\prob(w_1)$, $\prob(z_1 | w_1)$, $\prob(w_2 | z_1)$ that maximize the likelihood of observing the {\em word co-occurence counts}, $\hat{Q}$.
However, the optimization problem does not explicitly constrain the parameters to correspond an admixture model.

We can also define a similar algorithm using quadratic loss, which we call RecoverL2.
This formulation has the extremely useful property that both the objective and gradient can be kernelized so that the optimization problem 
is independent of the vocabulary size. To see this, notice that the objective can be re-written as
$$||\overline{Q}_i - C_i^T \overline{Q}_{\mathbf{S}}||^2 =  ||\overline{Q}_i||^2  - 2C_i(\overline{Q}_{\mathbf{S}} \overline{Q}_i^T)  + C_i^T(\overline{Q}_{\mathbf{S}}\overline{Q}^T_{\mathbf{S}})C_i,$$
where $\overline{Q}_{\mathbf{S}}\overline{Q}^T_{\mathbf{S}}$ is $K \times K$ and can be computed once and used for all words,  and $\overline{Q}_{\mathbf{S}} \overline{Q}_i^T$ is $K \times 1$ and can be computed once prior to running the exponentiated gradient algorithm for word $i$.

To recover the $R$ matrix for an admixture model, 
recall that $Q = ARA^T$.
This may be an over-constrained system of equations with no solution for $R$, but we can find a least-squares approximation to $R$ by pre- and post-multiplying $Q$ by the pseudo-inverse $A^\dagger$. 
For the special case of LDA we can learn the Dirichlet hyperparameters. Recall that in applying Bayes' rule we calculated $\prob(z_1) = \sum_{i^\prime} \prob(z_1 | w_1 = i^\prime)\prob(w_1 = i^\prime)$.  These values for $\prob(z)$ specify the Dirichlet hyperparameters up to a constant scaling. This constant could be recovered from the $R$ matrix \citep{AGM}, but in practice we find it is better to choose it using a grid search to maximize the likelihood of the training data.

We will see in Section~\ref{sec:results} that our nonnegative recovery algorithm performs much better on a wide range of performance metrics than the recovery algorithm in \citet{AGM}. In the supplementary material we show that it also inherits the theoretical guarantees of \citet{AGM}: given polynomially many documents, our algorithm returns an estimate $\hat{A}$ at most $\epsilon$ from the true word-topic matrix $A$.


\section{A Combinatorial Algorithm for Finding Anchor Words}

Here we consider the {\em anchor selection} step of the algorithm where our goal is to find the anchor words. In the {\em infinite data} case where we have infinitely many documents, the convex hull of the rows in $\overline{Q}$ will be a simplex where the vertices of this simplex correspond to the anchor words. 
Since we only have a finite number of documents, the rows of $\overline{Q}$ are only an approximation to their expectation. We are therefore given a set of $V$ points $d_1, d_2, ... d_V$ that are each a perturbation of $a_1, a_2, ... a_V$ whose convex hull $P$ defines a simplex. We would like to find an approximation to the vertices of $P$. See \citet{AGKM} and \citet{AGM} for more details about this problem. 

\begin{algorithm}[t]
\caption{FastAnchorWords\label{algorithm:fasteranchors}}
\label{alg:anchorword}
{\bf Input:} $V$ points $\{d_1, d_2, ... d_V\}$  in $V$ dimensions, almost in a simplex with $K$ vertices and $\epsilon > 0$

{\bf Output:} $K$ points that are close to the vertices of the simplex.

\begin{algorithmic}
\State Project the points $d_i$ to a randomly chosen $4 \log V/\epsilon^2$ dimensional subspace
\State $S \leftarrow \{d_i\}$ s.t. $d_i$ is the farthest point from the origin.
\For{$i = 1$ TO $K-1$}
\State Let $d_j$ be the point in $\{d_1, \ldots, d_V\}$ that has the largest distance to $\spn(S)$.
\State $S \leftarrow S\cup \{d_j\}$.
\EndFor
\State $S = \{v'_1, v'_2, ... v'_K\}$.
\For {$i = 1$ TO $K$}
\State Let $d_j$ be the point that has the largest distance to $\spn(\{v_1',v_2',...,v_K'\}\backslash \{v_i'\})$
\State Update $v'_i$ to $d_j$
\EndFor
\State Return $\{v_1', v_2', ..., v_K'\}$.

\end{algorithmic}
\hrulefill

{\small Notation: $\spn(S)$ denotes the subspace spanned by the points in the set $S$. We compute the distance from a point $x$ to the subspace $\spn(S)$ by computing the norm of the projection of $x$ onto the orthogonal complement of $\spn(S)$. }
\end{algorithm}

\citet{AGKM} give a polynomial time algorithm that finds the anchor words. However, their algorithm is based on solving $V$ linear programs, one for each word, to test whether or not a point is a vertex of the convex hull. In this section we describe a purely combinatorial algorithm for this task that avoids linear programming altogether. The new ``FastAnchorWords'' algorithm is given in Algorithm~\ref{algorithm:fasteranchors}. 
To find all of the anchor words, our algorithm iteratively finds the furthest point from the subspace spanned by the anchor words found so far.

Since the points we are given are perturbations of the true points, we cannot hope to find the anchor words exactly. 
Nevertheless, the intuition is that even if one has only found $r$ points $S$ that are close to $r$ (distinct) anchor words, the point that is furthest from $\spn(S)$ will itself be close to a (new) anchor word. The additional advantage of this procedure is that when faced with many choices for a next anchor word to find, our algorithm tends to find the one that is most different than the ones we have found so far. 



The main contribution of this section is a proof that the FastAnchorWords algorithm succeeds in finding $K$ points that are close to anchor words. To precisely state the guarantees, we recall the following definition from \citep{AGKM}:

\begin{definition}
A simplex $P$ is $\gamma$-robust if for every vertex $v$ of $P$, the $\ell_2$ distance between $v$ and the convex hull of the rest of the vertices is at least $\gamma$.
\end{definition}

\noindent In most reasonable settings the parameters of the topic model define lower bounds on the robustness of the polytope $P$. For example, in LDA, this lower bound is based on the largest ratio of any pair of hyper-parameters in the model \citep{AGM}. Our goal is to find a set of points that are close to the vertices of the simplex, and to make this precise we introduce the following definition:

\begin{definition}
Let $a_1, a_2, ... a_V$ be a set of points whose convex hull $P$ is a simplex with vertices $v_1, v_2, ... v_K$. Then we say $a_i$ $\epsilon$-covers $v_j$ if when $a_j$ is written as a convex combination of the vertices as $a_i = \sum_j c_j v_j$, then $c_j \geq 1-\epsilon$. Furthermore we will say that a set of $K$ points $\epsilon$-covers the vertices if each vertex is $\epsilon$ covered by some point in the set. 
\end{definition}

We will prove the following theorem: suppose there is a set of points $\mathcal{A} = a_1, a_2, ... a_V$ whose convex hull $P$ is $\gamma$-robust and has vertices $v_1, v_2, ... v_K$ (which appear in $\mathcal{A}$) and that we are given a perturbation $d_1, d_2, ... d_V$ of the points so that for each $i$, $\|a_i - d_i\| \leq \epsilon$, then:

\begin{theorem}
There is a combinatorial algorithm that runs in time $\tilde{O}(V^2+VK/\epsilon^2)$
\footnote{In practice we find setting dimension to 1000 works well. The running time is then $O(V^2+1000VK)$.} and outputs a subset of $\{d_1,\ldots, d_V\}$ of size $K$ that $O(\epsilon/\gamma)$-covers the vertices provided that  $20 K \epsilon/\gamma^2 < \gamma$.
\end{theorem}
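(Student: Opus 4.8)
The plan is to analyze the two phases of FastAnchorWords separately: the greedy ``build up'' phase that selects $K$ points one at a time, and the ``clean up'' phase that re-optimizes each chosen point against the span of the other $K-1$. Throughout I would work with the projected points (the random projection to dimension $O(\log V/\epsilon^2)$ distorts all pairwise distances, hence all distances to spans of $\le K$ points, by a $(1\pm\epsilon)$ factor by Johnson–Lindenstrauss, so up to constants it suffices to argue in the original space); this also accounts for the $VK/\epsilon^2$ term in the running time, while the $V^2$ term is the cost of forming $\bar Q$ and the initial projection, and each of the $2K$ farthest-point scans costs $O(VK)$ after projection.

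First I would set up the key geometric lemma: if $S$ is a set of $r$ points each within $O(\epsilon/\gamma)$ (in convex-combination coefficients, i.e. in the ``$\epsilon$-covers'' sense) of $r$ distinct vertices $v_{i_1},\dots,v_{i_r}$, then among the data points $d_1,\dots,d_V$ the one farthest from $\spn(S)$ is itself close to a new vertex. The argument: every $a_i$ is a convex combination $\sum_j c_j v_j$ of the true vertices, so its distance to $\spn(S)$ is controlled by $\sum_{j \notin \{i_1,\dots,i_r\}} c_j$ times the geometry of the simplex; $\gamma$-robustness gives that a true vertex $v_j$ not yet covered has distance $\ge \gamma$ (minus the $O(\epsilon)$ and $O(\epsilon/\gamma)$ slacks coming from $\|a_i-d_i\|\le\epsilon$ and from $S$ only approximating vertices) to $\spn(S)$, whereas any point whose coefficients $c_j$ on uncovered vertices sum to a small quantity $\delta$ has distance $O(\delta \cdot \mathrm{diam})$ to $\spn(S)$. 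Hence the maximizer must have large coefficient on some single uncovered vertex, and a short computation converts ``distance to $\spn(S)$ is near-maximal'' into ``coefficient on that vertex is $\ge 1 - O(\epsilon/\gamma)$''. This is where the hypothesis $20K\epsilon/\gamma^2 < \gamma$ is consumed: it guarantees the $O(\epsilon/\gamma)$ error accumulated over up to $K$ covered vertices stays well below $\gamma$, so the separation argument never collapses. I would do this by induction on $r$ from $1$ to $K$, the base case being that the farthest point from the origin is close to a vertex (a vertex maximizes the relevant linear functional, and robustness bounds how much an interior point can compete).

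Second, the clean-up phase: after the build-up we have $K$ points $\{v_1',\dots,v_K'\}$ that collectively $O(\epsilon/\gamma)$-cover all $K$ vertices, but possibly with a slightly worse constant, and we want to show re-optimizing each coordinate against the span of the others does not hurt and in fact tightens the guarantee to the clean $O(\epsilon/\gamma)$. The point is that once all vertices are covered, the span of $K-1$ of the $v_i'$ is $O(\epsilon/\gamma)$-close to the affine hull of $K-1$ genuine vertices, so the unique genuine vertex omitted is at distance $\ge \gamma - O(K\epsilon/\gamma)$ from it while every other data point is much closer; the farthest point is therefore forced to be that omitted vertex's neighborhood, giving an $\epsilon$-cover with the stated constant. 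I expect the main obstacle to be bookkeeping the error propagation cleanly — tracking how the ``$\epsilon$-covers'' slack on previously chosen points feeds into the distance-to-span estimates at the next step, and pinning down exactly which multiplicative factors of $1/\gamma$ and additive $K$ appear, so that the final constant is genuinely $O(\epsilon/\gamma)$ and the needed hypothesis is exactly $20K\epsilon/\gamma^2<\gamma$ rather than something weaker. The geometric core (vertex maximizes linear functionals, robustness quantifies the gap) is standard; making the perturbation analysis tight and uniform over all $2K$ greedy steps is the delicate part.
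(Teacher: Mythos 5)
Your overall architecture (greedy build-up analyzed by induction, then a clean-up pass, with Johnson--Lindenstrauss handling the dimension reduction and the running time) matches the paper's, but there is a genuine quantitative gap in the core step. You claim that in each greedy iteration, ``distance to $\spn(S)$ is near-maximal'' converts into ``coefficient on some single uncovered vertex is $\ge 1-O(\epsilon/\gamma)$.'' That conversion is not available. A distance slack of $O(\epsilon)$ only forces the coefficients to concentrate on the set of \emph{all} uncovered vertices up to $O(\epsilon/\gamma)$; to force concentration on a \emph{single} uncovered vertex you must rule out, e.g., the midpoint of two uncovered vertices $v_1,v_2$ that are both at maximal distance $\Delta$ from $\spn(S)$. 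After projecting to the orthogonal complement of $\spn(S)$, that midpoint has norm about $\Delta-\Theta(\gamma^2/\Delta)$ (the sagitta of a chord of length $\Theta(\gamma)$ in a ball of radius $\Delta$), so the distance deficit you can exploit is only $\Theta(\gamma^2)$, not $\Theta(\gamma)$. The paper's corresponding computation gives $c_t(1-c_t)\le \epsilon/(\Delta-\sqrt{\Delta^2-\gamma^2/16})=O(\epsilon/\gamma^2)$, i.e.\ the greedy phase only yields a $O(\epsilon/\gamma^2)$-cover, and that weaker bound is what the hypothesis $20K\epsilon/\gamma^2<\gamma$ is calibrated to keep the induction alive.

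Consequently you also mischaracterize the clean-up phase as merely ``tightening constants.'' In the paper it is responsible for a full factor of $\gamma$: once $K-1$ points are $O(\epsilon/\gamma^2)$-close to $K-1$ distinct vertices, every vertex but one is within $\delta$ of $\spn(S)$, so the bound $\dis(a_j,\spn(S))\le c_1\Delta+(1-c_1)\delta$ is a \emph{linear} estimate in the coefficients and yields $1-c_1\le O(\epsilon/\gamma)$ directly --- an argument that is unavailable during the build-up, where several uncovered vertices can simultaneously be far from $\spn(S)$. Without the clean-up the theorem's stated $O(\epsilon/\gamma)$-cover does not follow. A secondary, fillable gap: you assert that $\gamma$-robustness gives an uncovered vertex distance $\ge\gamma$ minus slacks from $\spn(S)$, but robustness bounds the distance to the convex hull of the \emph{other vertices}, not to the linear span of a \emph{perturbed subset} of them; the paper needs a volume/determinant perturbation lemma to get even the weaker bound $\gamma/2$ here, and your induction should be run with that lemma in hand.
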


This new algorithm not only helps us avoid linear programming altogether in inferring the parameters of a topic model, but also can be used to solve the nonnegative matrix factorization problem under the separability assumption, again without resorting to linear programming. Our analysis rests on the following lemmas, whose proof we defer to the supplementary material. Suppose the algorithm has found a set $S$ of $k$ points that are each $\delta$-close to distinct vertices in $\{v_1, v_2, ..., v_K\}$ and that $\delta < \gamma/20K$. 

\begin{lemma}
\label{lem:perturbation}
There is a vertex $v_i$ whose distance from $\spn(S)$ is at least $\gamma/2$.
\end{lemma}

The proof of this lemma is based on a volume argument, and the connection between the volume of a simplex and the determinant of the matrix of distances between its vertices. 

\begin{lemma}
\label{lem:inductionforfastalg}
The point $d_j$ found by the algorithm must be $\delta = O(\epsilon/\gamma^2)$ close to some vertex $v_i$.
\end{lemma}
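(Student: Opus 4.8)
The plan is to argue that, because the convex hull $P$ is $\gamma$-robust and the algorithm picks the point $d_j$ maximizing the distance to $\spn(S)$, the maximizer must be (a perturbed copy of) a vertex that is far from $\spn(S)$, and moreover close to that vertex. First I would set up notation: let $S$ be the current set of $k$ perturbed points, each within $\delta$ of a distinct vertex, with $\delta < \gamma/20K$. By Lemma~\ref{lem:perturbation} there is a vertex $v_i$ with $\dis(v_i, \spn(S)) \geq \gamma/2$. Since the algorithm works with the perturbed points $d_\ell$ and $\|a_\ell - d_\ell\| \le \epsilon$, and since $\spn(S)$ is spanned by points within $\delta$ of vertices, I would first show that perturbing the reference subspace and the candidate points changes distances by only $O(\epsilon) + O(\delta) = O(\delta)$ (using $\epsilon \le \delta$, or tracking both explicitly). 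Hence the perturbed copy $d_i$ of $v_i$ satisfies $\dis(d_i, \spn(S)) \geq \gamma/2 - O(\delta) \ge \gamma/3$, say, so the point $d_j$ actually returned by the algorithm has $\dis(d_j, \spn(S)) \ge \gamma/3$ as well.

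Next I would translate this lower bound on the distance of $d_j$ into a statement that the underlying true point $a_j$ is close to a vertex. Write $a_j = \sum_{m} c_m v_m$ as a convex combination of the vertices. The distance from $a_j$ to $\spn(S)$ is at most $\sum_m c_m \dis(v_m, \spn(S))$ by convexity/triangle inequality. For the $k$ vertices that $S$ approximates, $\dis(v_m, \spn(S)) = O(\delta)$; for every other vertex it is at most the diameter of $P$, which is $O(1)$ (bounded, since the points live in a bounded region — I would make this explicit, e.g. using that the $\bar Q$ rows are probability vectors, or simply normalizing). So if $a_j$ put weight $\Omega(1)$ on the "old" vertices only, its distance to $\spn(S)$ would be $O(\delta) \ll \gamma/3$, a contradiction. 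Therefore $a_j$ must put weight at least $1 - O(\delta/\gamma)$ on some single "new" vertex $v_i$ (here I use that the total weight on new vertices times their maximal distance must exceed $\gamma/3 - O(\delta)$, and then that $\gamma$-robustness forces this weight to concentrate on one vertex, since two new vertices each carrying weight $\Omega(\gamma)$ is fine but to reach distance $\gamma/3$ one vertex with weight close to $1$ suffices — I should be careful here and instead argue directly that $c_i \ge 1 - O(\delta/\gamma)$ for the dominant new vertex). This gives $\|a_j - v_i\| \le (1-c_i)\cdot\mathrm{diam}(P) = O(\delta/\gamma)$, and combined with $\|a_j - d_j\|\le \epsilon$ we get $\|d_j - v_i\| = O(\delta/\gamma) + \epsilon = O(\epsilon/\gamma^2)$ after substituting the bound on $\delta$ — wait, that substitution only works if $\delta$ itself is $O(\epsilon/\gamma)$; I would instead phrase the conclusion as $\|d_j - v_i\| = O(\delta/\gamma + \epsilon)$ and observe that maintained inductively with $\delta_{\text{new}} = O(\delta/\gamma + \epsilon)$ this stabilizes at $O(\epsilon/\gamma)$ — so let me reconsider.

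Actually the cleanest route, which I would adopt, separates two scales: the bound coming purely from the perturbation size $\epsilon$, and an error term that shrinks under iteration. I would show $d_j$ is within $O(\epsilon/\gamma)$ of a vertex \emph{at the current step}, using only $\epsilon$ (not $\delta$) in the concentration argument wherever possible: the key point is that $\dis(a_j,\spn(S))$ is large ($\ge \gamma/3$) while the weight $a_j$ can place on approximable-vertex-directions contributes only $O(\delta)\le O(\epsilon/\gamma)\cdot\text{(small)}$, forcing $1 - c_i = O(\epsilon/\gamma)$ relative to the robustness gap, hence $\|a_j - v_i\| = O(\epsilon/\gamma)$ and $\|d_j - v_i\| = O(\epsilon/\gamma)$. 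Finally I would remark that this is consistent with the inductive hypothesis $\delta < \gamma/20K$ precisely under the theorem's hypothesis $20K\epsilon/\gamma^2 < \gamma$, closing the induction.

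\medskip
\noindent\textbf{Main obstacle.} The delicate step is the concentration argument: from "$a_j$ is far from $\spn(S)$" deducing "$a_j$'s barycentric coordinates concentrate on one new vertex, with defect $O(\epsilon/\gamma)$." Naively one only gets that the total weight on new vertices is $\Omega(1)$; pinning it to a \emph{single} vertex with a quantitatively small defect requires using $\gamma$-robustness carefully — one must argue that if the weight were spread over two or more new vertices, the distance of $a_j$ to $\spn(S)$ could not simultaneously be as large as $\gamma/2 - O(\delta)$ and have the residual (orthogonal) component behave well, and controlling the constant in the $O(\epsilon/\gamma)$ so that it meshes with the $20K$ factor in the hypothesis. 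Keeping the two perturbation scales ($\epsilon$ from the data, $\delta$ from the already-found points) from contaminating each other in the induction is where the bookkeeping is most error-prone.
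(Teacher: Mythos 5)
Your proposal correctly sets up the frame (write $a_j = \sum_m c_m v_m$, use Lemma~\ref{lem:perturbation} to get a vertex at distance $\ge \gamma/2$ from $\spn(S)$, hence $\dis(d_j,\spn(S)) \ge \gamma/2 - 2\epsilon$), and you have correctly located the hard step --- but you do not carry it out, and the tool you reach for cannot do the job. The inequality $\dis(a_j,\spn(S)) \le \sum_m c_m\, \dis(v_m,\spn(S))$ is a first-order (linear) bound: since the far vertices can be at distance up to the diameter of the point set, it only yields that the \emph{total} weight on vertices far from $\spn(S)$ is $\Omega(\gamma)$, not that a single coefficient is $1 - O(\epsilon/\gamma^2)$. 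No amount of bookkeeping with the two scales $\epsilon$ and $\delta$, or iterating ``$\delta_{\mathrm{new}} = O(\delta/\gamma + \epsilon)$,'' repairs this, and your tentative final bound $O(\epsilon/\gamma)$ is asserted rather than derived. (Note also that the whole point of the lemma, as the paper emphasizes, is that the new $\delta$ depends only on $\epsilon$ and $\gamma$ and not on the previous step's $\delta$, so there is no iteration to ``stabilize.'')

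The missing idea in the paper's proof is a second-order geometric argument. Let $v_t$ be the vertex with the largest coefficient (so $c_t \ge 1/K$) and write $a_j = c_t v_t + (1-c_t)w$ with $w$ in the convex hull of the remaining vertices. Project everything onto the orthogonal complement of $\spn(S)$, so that distances to $\spn(S)$ become norms, and assume without loss of generality $\|v_t\| = \|w\| = \Delta := \max_l \dis(v_l,\spn(S)) > \gamma/2$. A separate application of Lemma~\ref{lem:perturbation} (to the set $V'\cup S$, where $V'$ consists of the unfound vertices other than $v_t$) shows that after projection $v_t$ is still at distance at least $\gamma/4$ from $w$ --- a step your sketch omits entirely. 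Then the identity $\|c_t v_t + (1-c_t)w\|^2 = \Delta^2 - c_t(1-c_t)\|v_t - w\|^2$ shows that the norm of $a_j$ sags \emph{quadratically} below $\Delta$ in the middle of the chord; since $\dis(a_j,\spn(S)) \ge \Delta - 2\epsilon$, this forces $c_t(1-c_t) \le \epsilon/\bigl(\Delta - \sqrt{\Delta^2 - \gamma^2/16}\bigr) = O(\epsilon/\gamma^2)$, and because $c_t \ge 1/K$ exceeds the small root of $x(1-x) = O(\epsilon/\gamma^2)$, it must be the large root, i.e. $1 - c_t = O(\epsilon/\gamma^2)$ and hence $\|d_j - v_t\| \le (1-c_t)\cdot O(1) + \epsilon = O(\epsilon/\gamma^2)$. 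This quadratic sag is precisely the ``concentration on a single vertex'' mechanism you flagged as the main obstacle; without it the proof does not close.
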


This lemma is used to show that the error does not accumulate too badly in our algorithm, since $\delta$ only depends on $\epsilon$, $\gamma$ (not on the $\delta$ used in the previous step of the algorithm). This prevents the error from accumulating exponentially in the dimension of the problem, which would be catastrophic for our proof. 

After running the first phase of our algorithm, we run a cleanup phase (the second loop in Alg. \ref{alg:anchorword}) that can reduce the error in our algorithm. When we have $K-1$ points close to $K-1$ vertices, only one of the vertices can be far from their span. The farthest point must be close to this missing vertex. The following lemma shows that this cleanup phase can improve the guarantees of  Lemma~\ref{lem:inductionforfastalg}:

\begin{lemma}\label{lem:cleanup}
Suppose $|S| = K-1$ and each point in $S$ is $\delta = O(\epsilon/\gamma^2) < \gamma/20K$ close to some vertex $v_i$, then the farthest point $v_j'$ found by the algorithm is $1 - O(\epsilon/\gamma)$ close to the remaining vertex.
\end{lemma}

This algorithm is a greedy approach to maximizing the volume of the simplex. The larger the volume is, the more words per document the resulting model can explain. Better anchor word selection is an open question for future work. We have experimented with a  variety of other heuristics for maximizing  simplex volume, with varying degrees of success.

{\bf Related work.}
The separability assumption has also been studied under the name ``pure pixel assumption" in the context of hyperspectral unmixing. A number of algorithms have been proposed that overlap with ours -- such as the VCA \citep{VCA} algorithm (which differs in that there is no clean-up phase) and the N-FINDR \citep{NFINDR} algorithm which attempts to greedily maximize the volume of a simplex whose vertices are data points. However these algorithms have only been proven to work in the infinite data case, and for our algorithm we are able to give provable guarantees even when the data points are perturbed (e.g., as the result of sampling noise). Recent work of \citet{thurau10cikm} and \citet{Kumar12} follow the same pattern as our paper, but use 
non-negative matrix factorization under the separability assumption. While 
both give applications to topic modeling, in realistic applications the term-by-document matrix is too sparse to be considered a good approximation to its expectation (because documents are short). In contrast, our algorithm 
works with the Gram matrix $Q$ so that we can give provable guarantees even when each document is short. 

\section{Experimental Results}
\label{sec:results}

We compare three parameter recovery methods, Recover \citep{AGM}, RecoverKL and RecoverL2 to a fast implementation of Gibbs sampling \citep{MALLET}.\footnote{We were not able to obtain \citet{AFHKL}'s implementation of their algorithm, and our own implementation is too slow to be practical.}
Linear programming-based anchor word finding is too slow to be comparable, so we use FastAnchorWords for all three recovery algorithms.
Using Gibbs sampling we obtain the word-topic distributions by averaging over 10 saved states, each separated by 100 iterations, after 1000 burn-in iterations.

\subsection{Methodology}

We train models on two synthetic data sets to evaluate performance when model assumptions are correct, and real documents to evaluate real-world performance.
To ensure that synthetic documents resemble the dimensionality and sparsity characteristics of real data, we generate {\em semi-synthetic} corpora.
For each real corpus, we train a model using MCMC and then generate new documents using the parameters of that model (these parameters are {\em not} guaranteed to be separable). 

We use two real-world data sets, a large corpus of {\bf New York Times} articles (295k documents, vocabulary size 15k, mean document length 298) and a small corpus of {\bf NIPS} abstracts (1100 documents, vocabulary size 2500, mean length 68).
Vocabularies were pruned with document frequency cutoffs.
We generate semi-synthetic corpora of various sizes from models trained with $K=100$ from NY Times and NIPS, with document lengths set to 300 and 70, respectively, and with document-topic distributions drawn from a Dirichlet with symmetric hyperparameters $0.03$.


We use a variety of metrics to evaluate models:
For the semi-synthetic corpora, we can compute {\bf reconstruction error} between the true word-topic matrix $A$ and learned topic distributions.
Given a learned matrix $\hat{A}$ and the true matrix $A$, we use an LP to find the best matching between topics.
Once topics are aligned, we evaluate $\ell_1$ distance between each pair of topics.
When true parameters are not available, a standard evaluation for topic models is to compute {\bf held-out probability}, the probability of previously unseen documents under the learned model.
This computation is intractable but there are reliable approximation methods \citep{wallach2009evaluation,buntine2009estimating}.
Topic models are useful because they provide interpretable latent dimensions.
We can evaluate the {\bf semantic quality} of individual topics using a metric called {\em Coherence}.
Coherence is based on two functions, $D(w)$ and $D(w_1, w_2)$, which are number of  documents with at least one instance of $w$, and of $w_1$ and $w_2$, respectively  \citep{mimno2011optimizing}.
Given a set of words $\mathcal{W}$, coherence is
\begin{align}
\label{eqn:pmi}
Coherence(\mathcal{W}) & = \sum_{w_1, w_2 \in \mathcal{W}} \log \frac{D(w_1, w_2) + \epsilon}{D(w_2)}.
\end{align}
The parameter $\epsilon=0.01$ is used to avoid taking the $\log$ of zero for words that never co-occur \citep{stevens2012exploring}.
This metric has been shown to correlate well with human judgments of topic quality.
If we perfectly reconstruct topics, all the high-probability words in a topic should co-occur frequently, otherwise, the model may be mixing unrelated concepts.
Coherence measures the quality of individual topics, but does not measure redundancy, so we measure {\bf inter-topic similarity}.
For each topic, we gather the set of the $N$ most probable words.
We then count how many of those words do not appear in any other topic's set of $N$ most probable words.
Some overlap is expected due to semantic ambiguity, but lower numbers of unique words indicate less useful models.

\subsection{Efficiency}

The Recover algorithms, in Python, are faster than a heavily optimized Java Gibbs sampling implementation \citep{yao2009efficient}.
\begin{figure}[tbp]
\begin{center}
\includegraphics[scale=0.35]{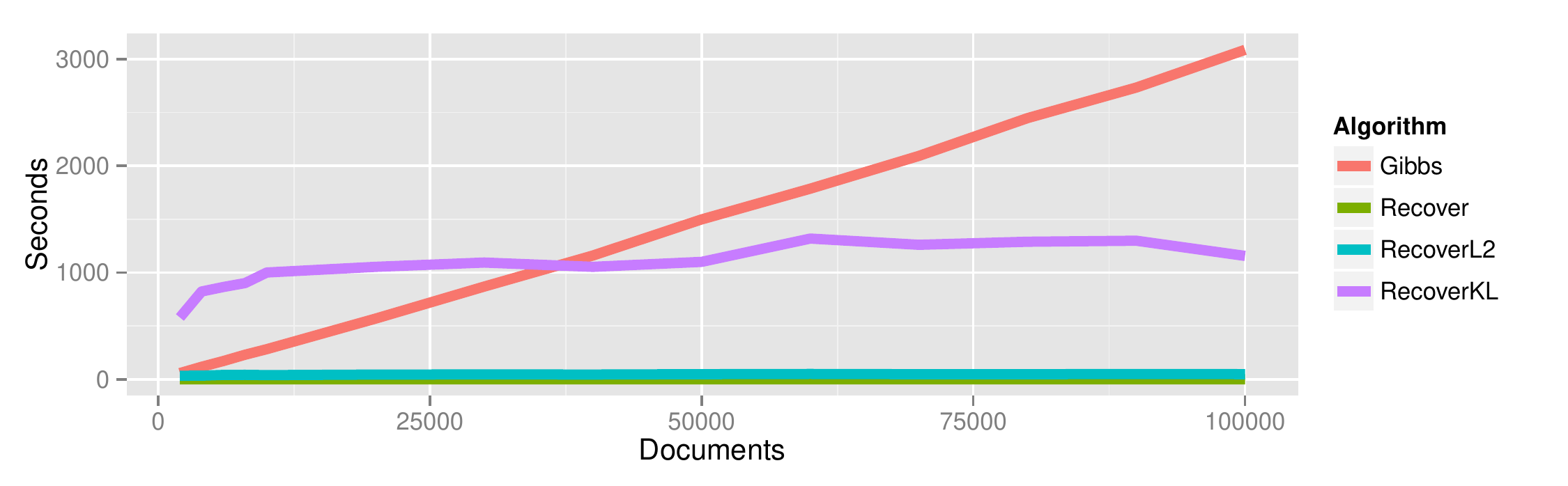}
\caption{Training time on synthetic NIPS documents.}
\label{fig:timing}
\end{center}
\end{figure}
Fig. \ref{fig:timing} shows the time to train models on synthetic corpora on a single machine.
Gibbs sampling is linear in the corpus size. 
RecoverL2 is also linear ($\rho=0.79$), but only varies from 33 to 50 seconds.
Estimating $Q$ is linear, but takes only 7 seconds for the largest corpus.
FastAnchorWords takes less than 6 seconds for all corpora.

\subsection{Semi-synthetic documents}

The new algorithms have good $\ell_1$ reconstruction error on semi-synthetic documents, especially for larger corpora.
Results for semi-synthetic corpora drawn from topics trained on NY Times articles are shown in Fig. \ref{fig:nyt-ss-l1} for corpus sizes ranging from 50k to 2M synthetic documents.
In addition, we report results for the three Recover algorithms on ``infinite data,'' that is, the true $Q$ matrix from the model used to generate the documents.
Error bars show variation between topics.
Recover performs poorly in all but the noiseless, infinite data setting.
Gibbs sampling has lower $\ell_1$ with smaller corpora, while the new algorithms get better recovery and lower variance with more data (although more sampling might reduce MCMC error further).
\begin{figure}[tbp]
\begin{center}
\includegraphics[scale=0.35]{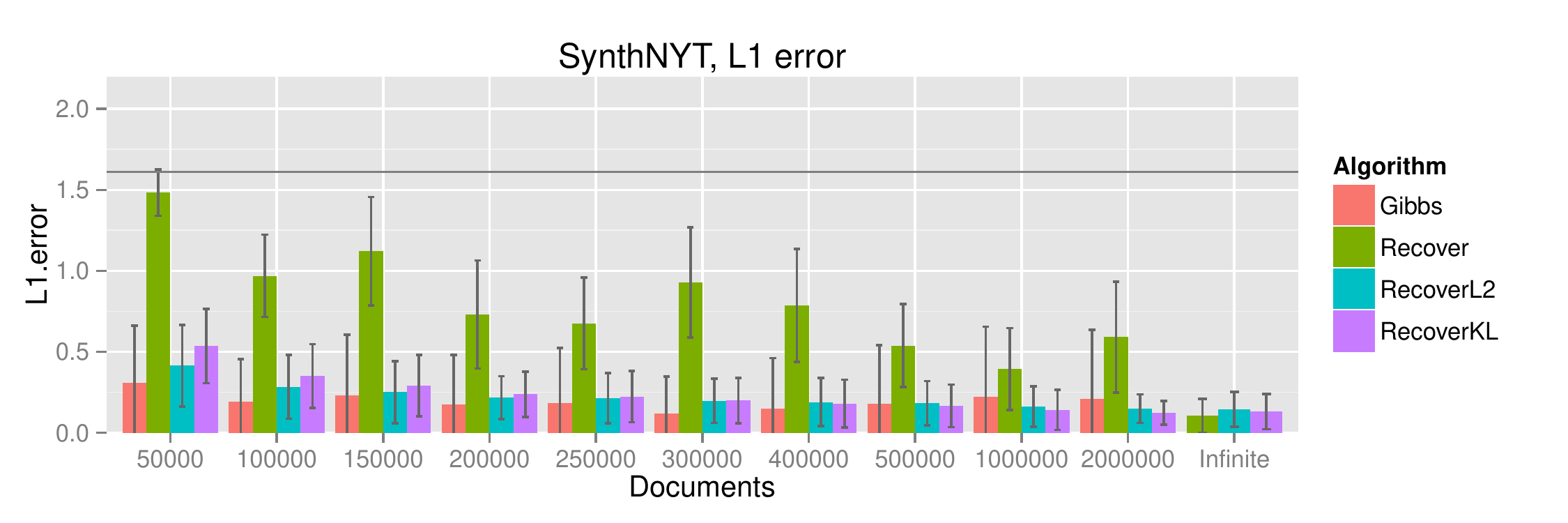}
\caption{$\ell_1$ error for a semi-synthetic model generated from a model trained on NY Times articles with $K=100$. The horizontal line indicates the $\ell_1$ error of $K$ uniform distributions.}
\label{fig:nyt-ss-l1}
\end{center}
\end{figure}

Results for semi-synthetic corpora drawn from NIPS topics are shown in Fig. \ref{fig:nips-ss-l1}.
Recover does poorly for the smallest corpora (topic matching fails for $D=2000$, so $\ell_1$ is not meaningful), but achieves moderate error for $D$ comparable to the NY Times corpus.
RecoverKL and RecoverL2 also do poorly for the smallest corpora, but are comparable to or better than Gibbs sampling, with much lower variance, after 40,000 documents.

\begin{figure}[tbp]
\begin{center}
\includegraphics[scale=0.35]{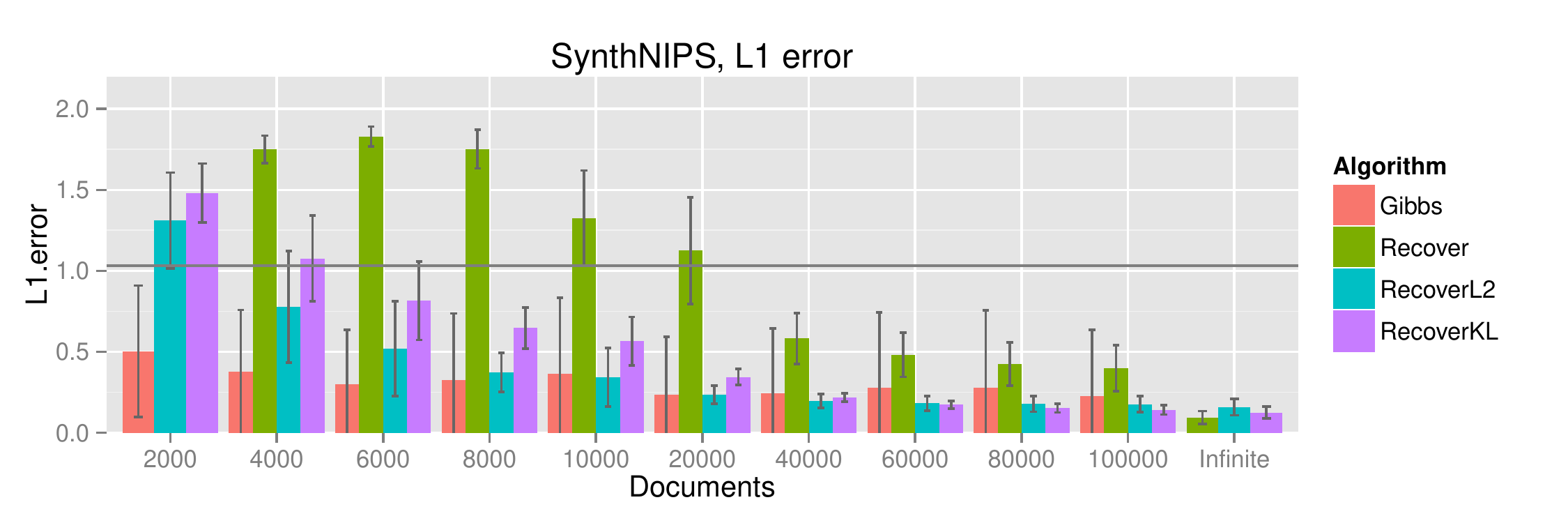}
\caption{$\ell_1$ error for a semi-synthetic model generated from a model trained on NIPS papers with $K=100$. Recover fails for $D=2000$.}
\label{fig:nips-ss-l1}
\end{center}
\end{figure}

\subsection{Effect of separability}

The non-negative algorithms are more robust to violations of the separability assumption than the original Recover algorithm.
In Fig. \ref{fig:nips-ss-l1}, Recover does not achieve zero $\ell_1$ error even with noiseless ``infinite'' data.
Here we show that this is due to lack of separability.
In our semi-synthetic corpora, documents are generated from the LDA model, but the topic-word distributions are learned from data and may not satisfy the anchor words assumption.
We test the sensitivity of algorithms to violations of the separability condition by adding a synthetic anchor word to each topic that is by construction unique to the topic.
We assign the synthetic anchor word a probability equal to the most probable word in the original topic. This causes the distribution to sum to greater than 1.0, so we renormalize.
Results are shown in Fig. \ref{fig:nyt-ss-anchor-l1}.
The $\ell_1$ error goes to zero for Recover, and close to zero for RecoverKL and RecoverL2. The reason RecoverKL and RecoverL2 do not reach exactly zero is because we do not solve the optimization problems to perfect optimality. 

\begin{figure}[tbp]
\begin{center}
\includegraphics[scale=0.35]{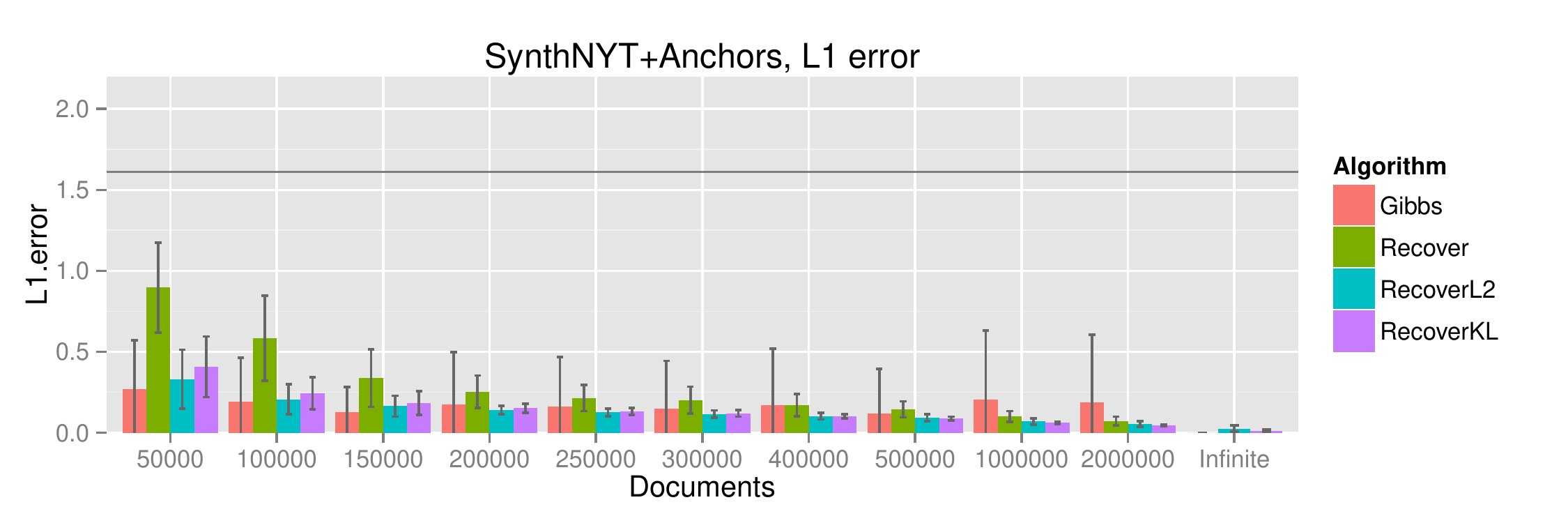}
\caption{When we add artificial anchor words before generating synthetic documents, $\ell_1$ error goes to zero for Recover and close to zero for RecoverKL and RecoverL2.}
\label{fig:nyt-ss-anchor-l1}
\end{center}
\end{figure}

\subsection{Effect of correlation}

The theoretical guarantees of the new algorithms apply even if topics are correlated.
To test how algorithms respond to correlation, we generated new synthetic corpora from the same $K=100$ model trained on NY Times articles. 
Instead of a symmetric Dirichlet distribution, we use a logistic normal distribution with a block-structured covariance matrix.
We partition topics into 10 groups. For each pair of topics in a group, we add a non-zero off-diagonal element to the covariance matrix.
This block structure is not necessarily realistic, but shows the effect of correlation.
Results for two levels of covariance ($\rho = 0.05, \rho = 0.1$) are shown in Fig. \ref{fig:nyt-ss-corr-l1}.
\begin{figure}[t]
\begin{center}
\includegraphics[scale=0.35]{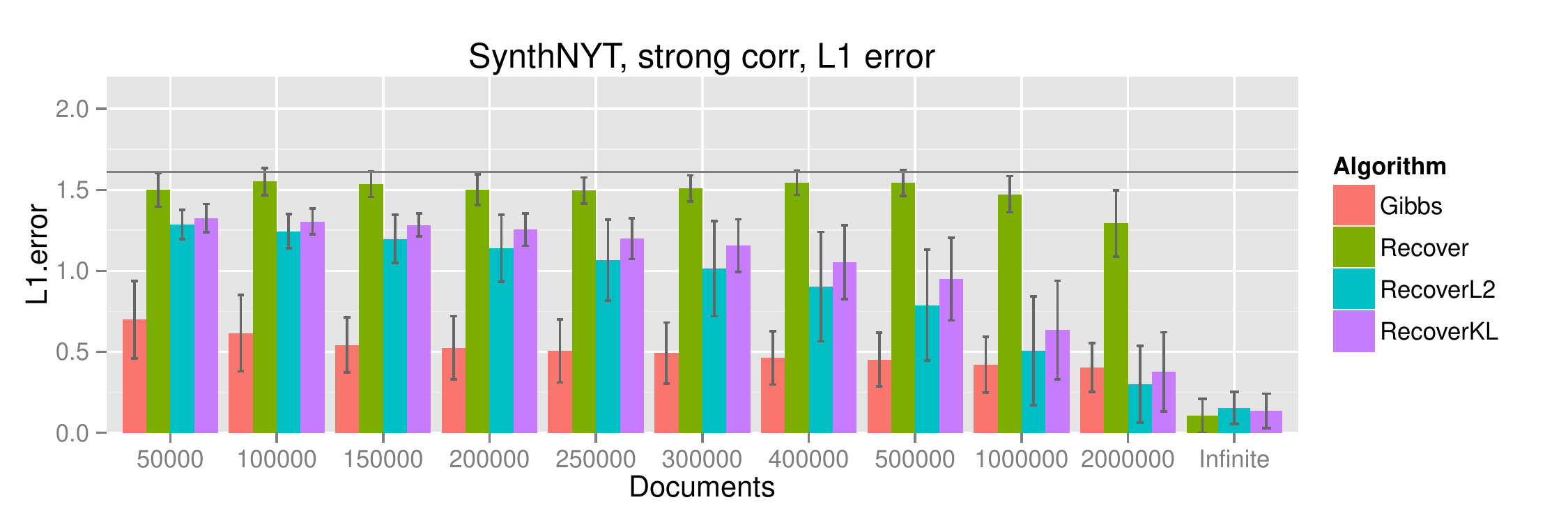}
\includegraphics[scale=0.35]{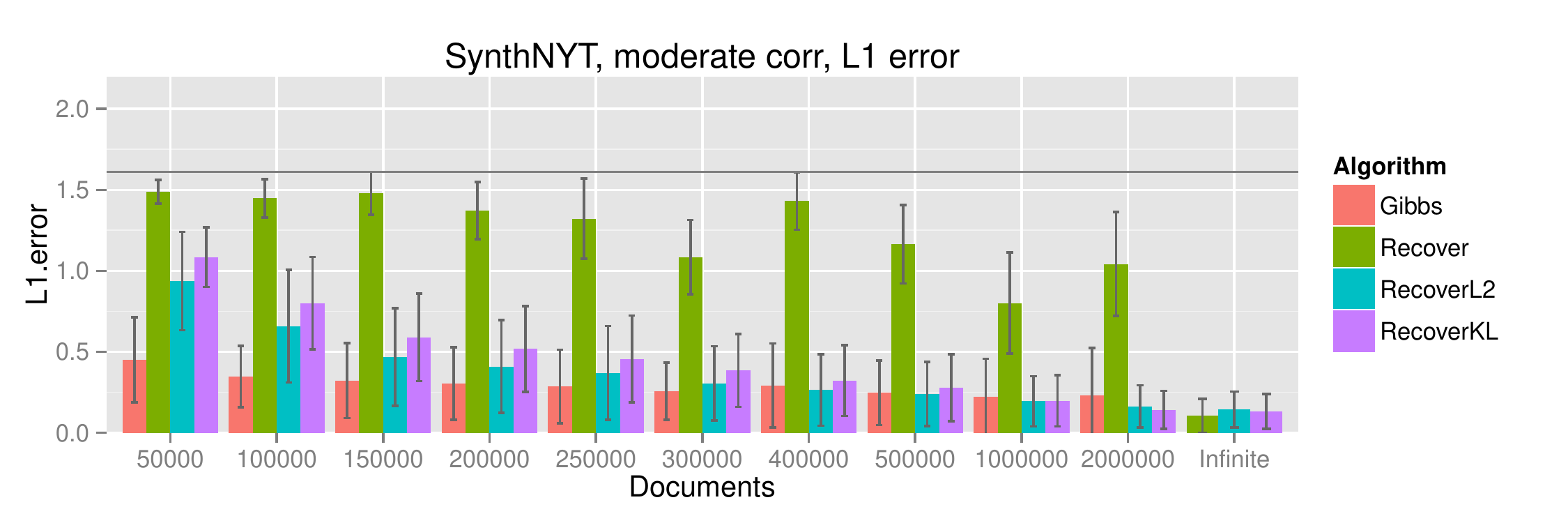}
\caption{$\ell_1$ error increases as we increase topic correlation. We use the same $K=100$ topic model from NY Times articles, but add correlation: TOP $\rho=0.05$, BOTTOM $\rho=0.1$.}
\label{fig:nyt-ss-corr-l1}
\end{center}
\end{figure}
Results for Recover are much worse in both cases than the Dirichlet-generated corpora in Fig. \ref{fig:nyt-ss-l1}.
The other three algorithms, especially Gibbs sampling, are more robust to correlation, but performance consistently degrades as correlation increases, and improves with larger corpora. With infinite data $\ell_1$ error is equal to $\ell_1$ error in the uncorrelated synthetic corpus (non-zero because of violations of the separability assumption). 

\subsection{Real documents}

The new algorithms produce comparable quantitative and qualitative results on real data.
Fig. \ref{fig:real-data} shows three metrics for both corpora.
Error bars show the distribution of log probabilities across held-out {\em documents} (top panel) and coherence and unique words across {\em topics} (center and bottom panels).
Held-out sets are 230 documents for NIPS and 59k for NY Times.
For the small NIPS corpus we average over 5 non-overlapping train/test splits.
The matrix-inversion in Recover failed for the smaller corpus (NIPS).
In the larger corpus (NY Times), Recover produces noticeably worse held-out log probability per token than the other algorithms.
Gibbs sampling produces the best average held-out probability ($p < 0.0001$ under a paired $t$-test), but the difference is within the range of variability between documents.
We tried several methods for estimating hyperparameters, but the observed differences did not change the relative performance of algorithms.
\begin{figure}[t]
\begin{center}
\includegraphics[scale=0.39]{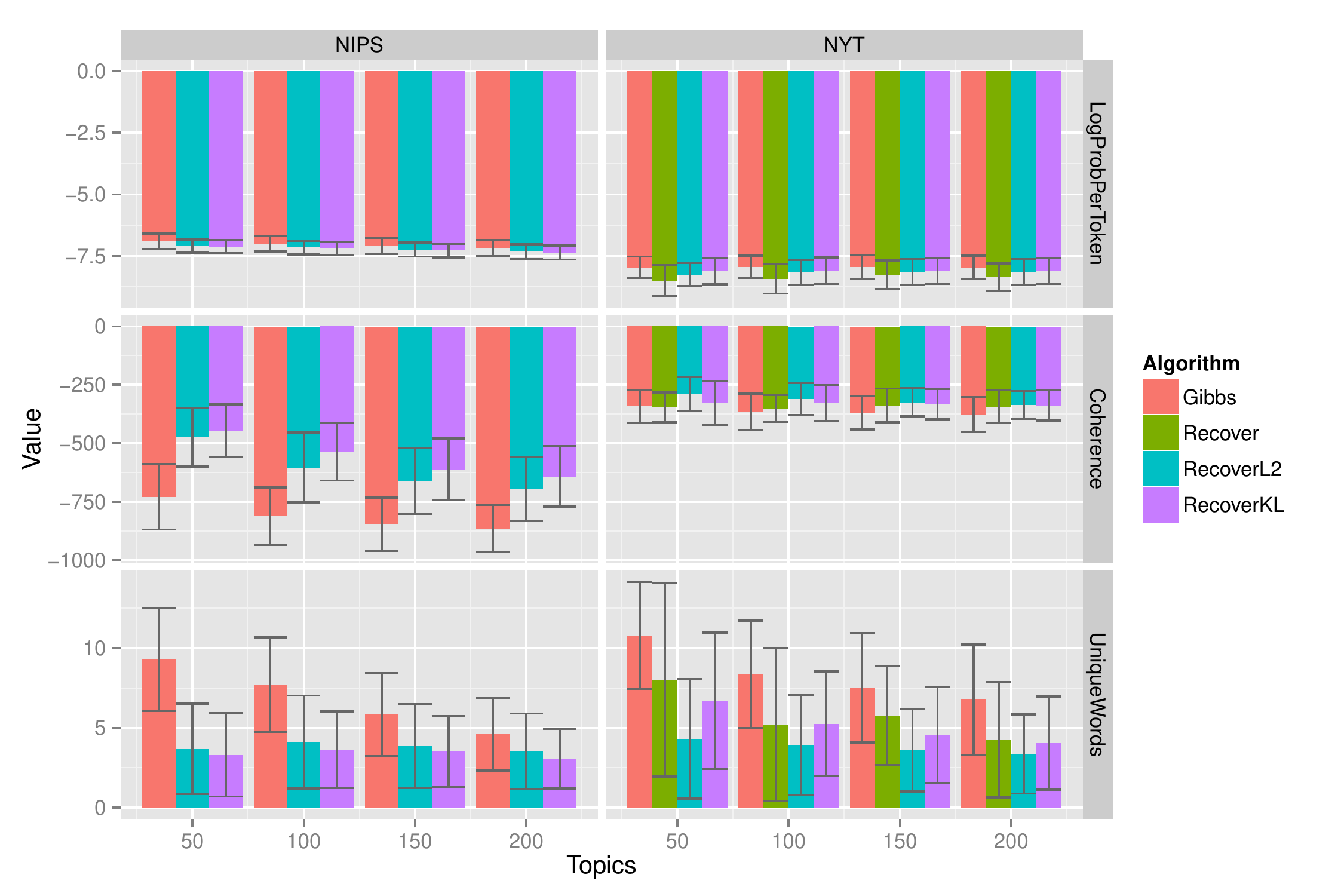}

\caption{Held-out probability (per token) is similar for RecoverKL, RecoverL2, and Gibbs sampling. RecoverKL and RecoverL2 have better coherence, but fewer unique words than Gibbs. (Up is better for all three metrics.)}
\label{fig:real-data}
\end{center}
\end{figure}
Gibbs sampling has worse coherence than the Recover algorithms, but produces more unique words per topic.
These patterns are consistent with semi-synthetic results for similarly sized corpora (details are in supplementary material).

For each NY Times topic learned by RecoverL2 we find the closest Gibbs topic by $\ell_1$ distance.
The closest, median, and farthest topic pairs are shown in Table \ref{tbl:samples}.\footnote{The UCI NY Times corpus includes named-entity annotations, indicated by the {\em zzz} prefix.}
We observe that when there is a difference, recover-based topics tend to have more specific words ({\em Anaheim Angels} vs. {\em pitch}).

\begin{table}[tdp]

\caption{Example topic pairs from NY Times (closest $\ell_1$), anchor words in bold. All 100 topics are in suppl. material.}
\begin{center}
\rowcolors{1}{}{gray!20}
{\tiny
\begin{tabular}{c|p{6cm}}
RecoverL2 & run inning game hit season zzz\_anaheim\_angel \\ 
Gibbs & run inning hit game ball pitch \\ 
\hline
RecoverL2 &  father family {\bf zzz\_elian} boy court zzz\_miami  \\
Gibbs & zzz\_cuba zzz\_miami cuban zzz\_elian boy protest  \\
\hline
RecoverL2 & {\bf file} sport read internet email zzz\_los\_angeles \\ 
Gibbs & web site com www mail zzz\_internet \\ 
\end{tabular}
}
\end{center}
\label{tbl:samples}

\end{table}%

\section{Conclusions}

We present new algorithms for topic modeling, inspired by \citet{AGM}, which are efficient and simple to implement yet maintain provable guarantees.
The running time of these algorithms is effectively independent of the size of the corpus.
Empirical results suggest that the sample complexity of these algorithms is somewhat greater than MCMC, but, particularly for the $\ell_2$ variant, they provide comparable results in a fraction of the time.
We have tried to use the output of our algorithms as initialization for further optimization (e.g. using MCMC) but have not yet found a hybrid that out-performs either method by itself.
Finally, although we defer parallel implementations to future work, these algorithms are parallelizable, potentially supporting web-scale topic inference.

\appendix

\newpage

\section{Proof for Anchor-Words Finding Algorithm}

Recall that the correctness of the algorithm depends on the following Lemmas:

\begin{lemma}
\label{lem:perturbation}
There is a vertex $v_i$ whose distance from $\spn(S)$ is at least $\gamma/2$.
\end{lemma}


\begin{lemma}
\label{lem:inductionforfastalg}
The point $\Delta_j$ found by the algorithm must be $\delta = O(\epsilon/\gamma^2)$ close to some vertex $v_i$.
\end{lemma}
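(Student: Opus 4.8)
The plan is to show that the point $\Delta_j$ chosen by the algorithm — the perturbed point farthest from $W := \spn(S)$ — must sit close to a single vertex, by first lower bounding its distance to $W$ and then arguing that only points near a vertex can be that far from $W$. Throughout I work under the stated hypotheses: $|S| = k$, every point of $S$ is $\delta$-close to a distinct vertex with $\delta < \gamma/20K$, and I let $T \subseteq \{1,\dots,K\}$ index the $k$ vertices shadowed by $S$, so that $\mathrm{dist}(v_l, W) \le \delta$ for $l \in T$. The first, easy half invokes Lemma~\ref{lem:perturbation}: set $h := \max_l \mathrm{dist}(v_l, W)$ with maximizer $\hat l$; the lemma gives $h \ge \gamma/2$, and since $\delta < \gamma/2$ the maximizer must lie in $T^c$. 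Because $v_{\hat l} \in \calA$, its perturbation is at distance $\ge h-\epsilon$ from $W$, so the algorithm's farthest point $\Delta_j$ also has distance $\ge h - \epsilon$; as $\mathrm{dist}(\cdot,W)$ is $1$-Lipschitz, the true point $a_j$ corresponding to $\Delta_j$ satisfies $\mathrm{dist}(a_j, W) \ge h - 2\epsilon$.

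Next I would convert ``far from $W$'' into ``near a vertex.'' Write $a_j = \sum_l c_l v_l$ as a convex combination and project onto $W^\perp$, giving $\tilde a_j = \sum_l c_l \tilde v_l$ with $\|\tilde v_l\| \le \delta$ for $l \in T$ and $\|\tilde v_{\hat l}\| = h$ maximal. Put $\beta = 1 - c_{\hat l}$ and write $\tilde a_j = \tilde v_{\hat l} - \beta(\tilde v_{\hat l} - z)$, where $z = \beta^{-1}\sum_{l \neq \hat l} c_l \tilde v_l$ is a convex combination of the remaining projected vertices, so $\|z\| \le h$ and $\|\tilde v_{\hat l} - z\| \ge \rho := \mathrm{dist}(\tilde v_{\hat l}, \mathrm{conv}\{\tilde v_l\}_{l \neq \hat l})$. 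Expanding $\|\tilde a_j\|^2$ and using the elementary bound $\langle \tilde v_{\hat l}, \tilde v_{\hat l} - z\rangle \ge \tfrac12\|\tilde v_{\hat l} - z\|^2$ (valid since $\|z\| \le \|\tilde v_{\hat l}\|$) together with $\|\tilde a_j\|^2 \ge h^2 - 4\epsilon h$, one obtains the key estimate $\beta(1-\beta)\,\rho^2 \le 4\epsilon h$.

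To finish, I note that the rows are probability vectors, so $h \le 1$ and $\mathrm{diam}(P) \le 2$. Hence, provided the projected robustness satisfies $\rho = \Omega(\gamma)$, the estimate forces $\beta = O(\epsilon/\gamma^2)$, and then $\|a_j - v_{\hat l}\| = \|\sum_{l \neq \hat l} c_l(v_l - v_{\hat l})\| \le \beta\,\mathrm{diam}(P) = O(\epsilon/\gamma^2)$, so $\|\Delta_j - v_{\hat l}\| \le \epsilon + O(\epsilon/\gamma^2) = O(\epsilon/\gamma^2)$, which is the claim.

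The main obstacle is the lower bound $\rho = \Omega(\gamma)$ on the robustness of the \emph{projected} configuration: projection onto $W^\perp$ is a contraction, so $\gamma$-robustness of $P$ does not transfer for free. What should save the argument is that the directions removed lie (approximately) in $\spn\{v_l : l \in T\}$, so projecting them out measures $v_{\hat l}$ against the remaining vertices rather than collapsing it onto them. Concretely I would replace $W$ by the exact subspace $W_0 = \spn\{v_l\}_{l \in T}$ — the two agree up to a perturbation of order $\delta/\gamma$, since the well-conditioned $T$-vertices have nearby points spanning a nearby subspace, itself a consequence of robustness — observe that the $T$-vertices project to $0$ under $P_{W_0^\perp}$, and thereby reduce $\rho$ to the distance from $v_{\hat l}$ to the span/convex hull of the other vertices. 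Bounding this below by $\Omega(\gamma)$ is where the simplex structure and the normalization of the rows (all lying on $\{\mathbf 1^\top x = 1\}$) must be combined with $\gamma$-robustness; the two factors of $1/\gamma$ in the final bound arise precisely from the $\rho^2 = \Omega(\gamma^2)$ in the denominator of the concentration estimate.
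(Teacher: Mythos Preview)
Your overall architecture---lower-bound the distance of the chosen point to $\spn(S)$, then argue that only points near a vertex can be that far---matches the paper's, and your two-dimensional computation leading to $\beta(1-\beta)\rho^{2}\le 4\epsilon h$ is correct. But there is a genuine gap in how you pick the distinguished vertex. You take $\hat{l}$ to be the vertex \emph{farthest from $W$}, and then try to conclude $\beta = 1-c_{\hat l} = O(\epsilon/\gamma^{2})$ from $\beta(1-\beta)=O(\epsilon/\gamma^{2})$. That inference is unjustified: nothing prevents $\beta$ from being close to $1$, i.e.\ $c_{\hat l}$ close to $0$. Concretely, there may be several vertices in $T^{c}$ all nearly at distance $h$ from $W$, and $a_j$ could be concentrated on one of them other than $\hat l$; your inequality is then satisfied with $1-\beta$ small, not $\beta$. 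The paper resolves this by choosing the distinguished vertex $v_t$ differently: it lets $v_t$ be the vertex with the \emph{largest coefficient} $c_t$ in the convex representation of $a_j$. This guarantees $c_t\ge 1/K$, so once the product $c_t(1-c_t)=O(\epsilon/\gamma^{2})$ is below $1/K$ (ensured by the hypothesis $20K\epsilon/\gamma^{2}<\gamma$), the small factor must be $1-c_t$. With this choice one no longer knows $\|\tilde v_t\|=h$, but the paper handles that by noting that scaling $\tilde v_t$ and $\tilde w$ up to norm $h$ can only increase $\|\tilde a_j\|$, so the two-point estimate still applies.

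A secondary point: your plan for the projected separation $\rho=\Omega(\gamma)$ is on the right track but incomplete; you end up needing the distance from $v_{\hat l}$ (or $v_t$) to the span of the remaining $K-1$ vertices after projection, and you appeal vaguely to ``simplex structure and normalization''. The paper gets this cleanly by a \emph{second} application of Lemma~\ref{lem:perturbation}, this time to the set $V'\cup S$ where $V'=\{v_i:i\ne t,\ v_i\text{ not shadowed by }S\}$. That set consists of $K-1$ points each $\delta$-close to a distinct vertex, so the lemma furnishes a vertex at distance $\ge\gamma/2$ from its span, and the only candidate is $v_t$. This directly gives $\|\tilde v_t-\tilde w\|\ge \gamma/2-\delta>\gamma/4$, with no need for any normalization hypothesis.
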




In order to prove Lemma~\ref{lem:perturbation}, we use a volume argument. First we show that the volume of a robust simplex cannot change by too much when the vertices are perturbed.

\begin{lemma}
\label{lem:volperturbation}
Suppose $\{v_1, v_2, ..., v_K\}$ are the vertices of a $\gamma$-robust simplex $S$. Let $S'$ be a simplex with vertices $\{v_1', v_2', ..., v_K'\}$, each of the vertices $v_i'$ is a perturbation of $v_i$ and $\norm{v_i' - v_i}_2 \le \delta$. When $10\sqrt{K} \delta < \gamma$ the volume of the two simplices satisfy

\[
\mbox{vol}(S) (1-2\delta/\gamma)^{K-1} \le \mbox{vol}(S') \le \mbox{vol}(S) (1+4\delta/\gamma)^{K-1}.
\]
\end{lemma}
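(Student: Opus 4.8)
The plan is to relate the volume of a simplex with vertices $v_1,\dots,v_K$ to a determinant of a matrix built from the edge vectors, and then to bound how much that determinant can move under a $\delta$-perturbation of the vertices, using $\gamma$-robustness to control the relevant singular values. Concretely, write $\mathrm{vol}(S) = \frac{1}{(K-1)!}\,\bigl|\det M\bigr|$ where $M$ is the $(K-1)\times(K-1)$ matrix whose columns are $v_i - v_K$ expressed in an orthonormal basis of the affine hull; similarly $\mathrm{vol}(S') = \frac{1}{(K-1)!}\,|\det M'|$ with $M'$ built from $v_i' - v_K'$. The key quantitative input is that $\gamma$-robustness forces the smallest singular value of $M$ to be at least $\gamma$ (each edge vector has a component of length $\ge \gamma$ orthogonal to the span of the others), and of course $\|M\| \le \mathrm{diam}(S)$ — but for the multiplicative bound we only need the lower bound $\sigma_{\min}(M)\ge\gamma$, since the perturbation $M' = M + E$ has $\|E\| \le 2\delta$ (each column perturbed by at most $2\delta$ in operator norm, or $\le \sqrt{2}\cdot\sqrt{2}\,\delta$ more carefully; the constant $10\sqrt K$ in the hypothesis gives plenty of room).

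First I would set up the determinant-volume identity and fix an orthonormal frame for the $(K-1)$-dimensional affine hull of $S$; I would project $S'$ into the same frame, noting the projection only decreases distances so the perturbation bound $\|v_i' - v_K' \text{ (projected)} - (v_i - v_K)\| \le 2\delta$ still holds, and the projected volume is a lower bound for $\mathrm{vol}(S')$ — one then needs a matching argument in the frame of $S'$ for the other inequality, or one handles both by symmetry of the roles of $S,S'$ (here robustness of $S'$ follows from that of $S$ once $\delta$ is small, giving $\sigma_{\min}(M')\ge\gamma - 2\sqrt K\,\delta \ge \gamma/2$ say). Second, I would write $\det M' = \det M \cdot \det(I + M^{-1}E)$ and bound $\|M^{-1}E\| \le \|M^{-1}\|\,\|E\| \le 2\delta/\gamma$, whence $|\det(I+M^{-1}E)| \le (1 + 2\delta/\gamma)^{K-1}$ by Hadamard/the eigenvalue bound $\prod|1+\lambda_i| \le (1+\|M^{-1}E\|)^{K-1}$. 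This already gives an upper bound of the right shape; the stated constant $(1+4\delta/\gamma)^{K-1}$ absorbs the slack from using $\|E\|\le 2\delta$ somewhat loosely and from any frame-change. For the lower bound, apply the same reasoning with the roles reversed: $\det M = \det M' \cdot \det(I + (M')^{-1}(-E))$, and using $\|(M')^{-1}\| \le 2/\gamma$ gives $|\det M| \le |\det M'|\,(1 + 4\delta/\gamma)^{K-1}$, i.e. $\mathrm{vol}(S') \ge \mathrm{vol}(S)(1+4\delta/\gamma)^{-(K-1)} \ge \mathrm{vol}(S)(1 - 2\delta/\gamma)^{K-1}$ for $\delta/\gamma$ small (the last step is the elementary inequality $(1+4x)^{-1} \ge 1-2x\cdot(\text{something})$ — actually one gets $1-4x \le (1+4x)^{-1}$, and then $1-4x$ versus $(1-2\delta/\gamma)$; I would instead prove the lower bound directly as $|\det(I+M^{-1}E)| \ge (1 - \|M^{-1}E\|)^{K-1} \ge (1-2\delta/\gamma)^{K-1}$ using that each eigenvalue of $M^{-1}E$ has modulus $\le 2\delta/\gamma$, which is cleaner and gives exactly the claimed form).

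The main obstacle I anticipate is the bookkeeping around changing orthonormal frames: $S$ and $S'$ live in different $(K-1)$-flats, so comparing $\det M$ and $\det M'$ requires either projecting one simplex onto the other's affine hull (and controlling the distortion of that projection in terms of $\delta$ and $\gamma$) or working intrinsically with Gram matrices, $\mathrm{vol}(S)^2 = \frac{1}{((K-1)!)^2}\det(G)$ where $G_{ij} = \langle v_i - v_K, v_j - v_K\rangle$. The Gram-matrix route avoids frames entirely: $G' = G + \Delta$ with $\|\Delta\| \le O(\sqrt K\,\delta\,\mathrm{diam}(S)) $ — but this reintroduces a diameter dependence, which is presumably why the robustness-based singular-value bound is the intended tool and why one works with $M$ directly after a single common projection. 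I would present the common-projection version, verify that projection onto the affine hull of $S$ changes each $v_i'$ by at most $\delta$ more (it moves $v_i'$ toward $v_i$, which is already within $\delta$), so the effective perturbation in the common frame is $\le 2\delta$, and then run the determinant perturbation bound above; the hypothesis $10\sqrt K\,\delta < \gamma$ is exactly what keeps $\sigma_{\min}$ of both matrices bounded below by a constant multiple of $\gamma$ throughout, so all the $\|M^{-1}E\|$ terms stay below $1$ and the geometric-series / eigenvalue-product estimates are valid.
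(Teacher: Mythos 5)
Your overall skeleton matches the paper's proof: both arguments express the volume ratio as $\det(I+M^{-1}E)$ for an edge matrix $M$ and perturbation $E$, use $\gamma$-robustness to lower-bound $\sigma_{\min}(M)$ by $\gamma$, handle the mismatch of affine hulls by a projection that can only shrink volume and perturbation distances, and obtain the upper bound by swapping the roles of $S$ and $S'$ after checking that the perturbed simplex is still $\Omega(\gamma)$-robust (the paper cites Wedin's theorem for this; your Weyl-type bound $\sigma_{\min}(M')\ge\gamma-2\sqrt{K}\delta$ serves the same purpose).

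There is, however, one genuine quantitative gap in your determinant estimate. You bound each eigenvalue of $M^{-1}E$ by the operator norm $\lVert M^{-1}E\rVert \le \lVert M^{-1}\rVert\,\lVert E\rVert$ and assert $\lVert E\rVert \le 2\delta$ because each column of $E$ has norm at most $2\delta$. A columnwise bound of $2\delta$ only gives $\lVert E\rVert_F \le 2\sqrt{K-1}\,\delta$, hence $\lVert E\rVert \le 2\sqrt{K-1}\,\delta$; it does not give operator norm $2\delta$. With the correct bound, your eigenvalue-product estimate yields $\bigl(1\pm O(\sqrt{K}\,\delta/\gamma)\bigr)^{K-1}$, which is strictly weaker than the claimed $\bigl(1\pm O(\delta/\gamma)\bigr)^{K-1}$ and would propagate an extra factor of $\sqrt{K}$ into the condition $\gamma > 20K\epsilon'$ used downstream in Lemma~\ref{lem:perturbation}. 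The paper closes exactly this gap with an extra step you are missing: it controls the eigenvalues $\lambda_1,\dots,\lambda_K$ of $M^{-1}E$ only through the constraint $\sum_i \lambda_i^2 \le \lVert M^{-1}E\rVert_F^2 \le K\delta^2/\gamma^2$ and then shows, by a Lagrange-multiplier argument, that $\prod_i(1+\lambda_i)$ subject to this constraint is minimized when all $\lambda_i$ are equal, i.e.\ each equal to $-\delta/\gamma$. Spreading the Frobenius budget evenly is what removes the $\sqrt{K}$ from the per-factor loss and produces the dimension-free base $(1-\delta/\gamma)$ in the bound. To repair your proof you would need to replace the per-eigenvalue operator-norm bound with this sum-of-squares (Schur-inequality plus equal-eigenvalue worst case) argument, or some equivalent device.
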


\begin{proof}
As the volume of a simplex is proportional to the determinant of a matrix whose columns are the edges of the simplex, we first
show the following perturbation bound for determinant.

\begin{claim}
Let $A$, $E$ be $K\times K$ matrices, the smallest eigenvalue of $A$ is at least $\gamma$, the Frobenius norm $\norm{E}_F \le \sqrt{K}\delta$, when $\gamma > 5 \sqrt{K}\delta$ we have

\[
\det(A+E)/\det(A) \ge (1-\delta/\gamma)^K.
\]
\end{claim}

\begin{proof}
Since $\det(AB) = \det(A)\det(B)$, we can multiply both $A$ and $A+E$ by $A^{-1}$. Hence $\det(A+E)/\det(A) = \det(I + A^{-1}E)$. 

The Frobenius norm of  $A^{-1}E$ is bounded by

$$\norm{A^{-1}E}_F \le \norm{A^{-1}}_2 \norm{E}_F \le \sqrt{K} \delta/\gamma.$$

Let the eigenvalues of $A^{-1}E$ be $\lambda_1, \lambda_2, ..., \lambda_K$, then by definition of Frobenius Norm 
$\sum_{i=1}^K \lambda_i^2 \le \norm{A^{-1}E}_F^2 \le K\delta^2/\gamma^2$.
The eigenvalues of $I + A^{-1}E$ are just $1+\lambda_1, 1+\lambda_2, ..., 1+\lambda_K$, and the
determinant $\det(I+A^{-1}E) = \prod_{i=1}^K (1+\lambda_i)$. Hence it suffices to show

$$\min \prod_{i=1}^K (1+\lambda_i) \ge (1-\delta/\gamma)^K \mbox{ when }\sum_{i=1}^K \lambda_i^2 \le K\delta^2/\gamma^2.$$

To do this we apply Lagrangian method and show the minimum is only obtained when all $\lambda_i$'s are equal. The optimal value must be obtained at a local optimum of 

$$
\prod_{i=1}^K (1+\lambda_i)  + C \sum_{i=1}^K \lambda_i^2.
$$

Taking partial derivatives with respect to $\lambda_i$'s, we get the equations $ -\lambda_i (1+\lambda_i) = -\prod_{i=1}^K (1+\lambda_i) / 2C$ (here using $\sqrt{K}\delta/\gamma$ is small so $1+\lambda_i > 1/2 > 0$). The right hand side is a constant, so each $\lambda_i$ must be one of the two solutions of this equation. However, only one of the solution is larger than $1/2$, therefore all the $\lambda_i$'s are equal.
\end{proof}

For the lower bound, we can project the perturbed subspace to the $K-1$ dimensional space. Such 
a projection cannot increase the volume and the perturbation distances only get smaller. 
Therefore we can apply the claim directly, the columns of $A$ are just $v_{i+1}- v_1$ for $i = 1, 2, ..., K-1$; columns of $E$ are just $v_{i+1}'-v_{i+1} - (v_1'-v_1)$. The smallest eigenvalue of $A$ is at least $\gamma$ because the polytope is $\gamma$ robust, which is equivalent to saying after orthogonalization each column still has length at least $\gamma$. The Frobenius norm of $E$ is at most $2\sqrt{K-1}\delta$. We get the lower bound directly by applying the claim.

For the upper bound, swap the two sets $S$ and $S'$ and use the argument for the lower bound. The only thing we need to show is that the smallest eigenvalue of the matrix generated by points in $S'$ is still at least $\gamma/2$. This follows from Wedin's Theorem \citep{wedin1972perturbation}
and the fact that $\norm{E} \le \norm{E}_F \le \sqrt{K}\delta \le \gamma/2$.
\end{proof}

\begin{figure}
\begin{center}
\includegraphics[width=3.5in]{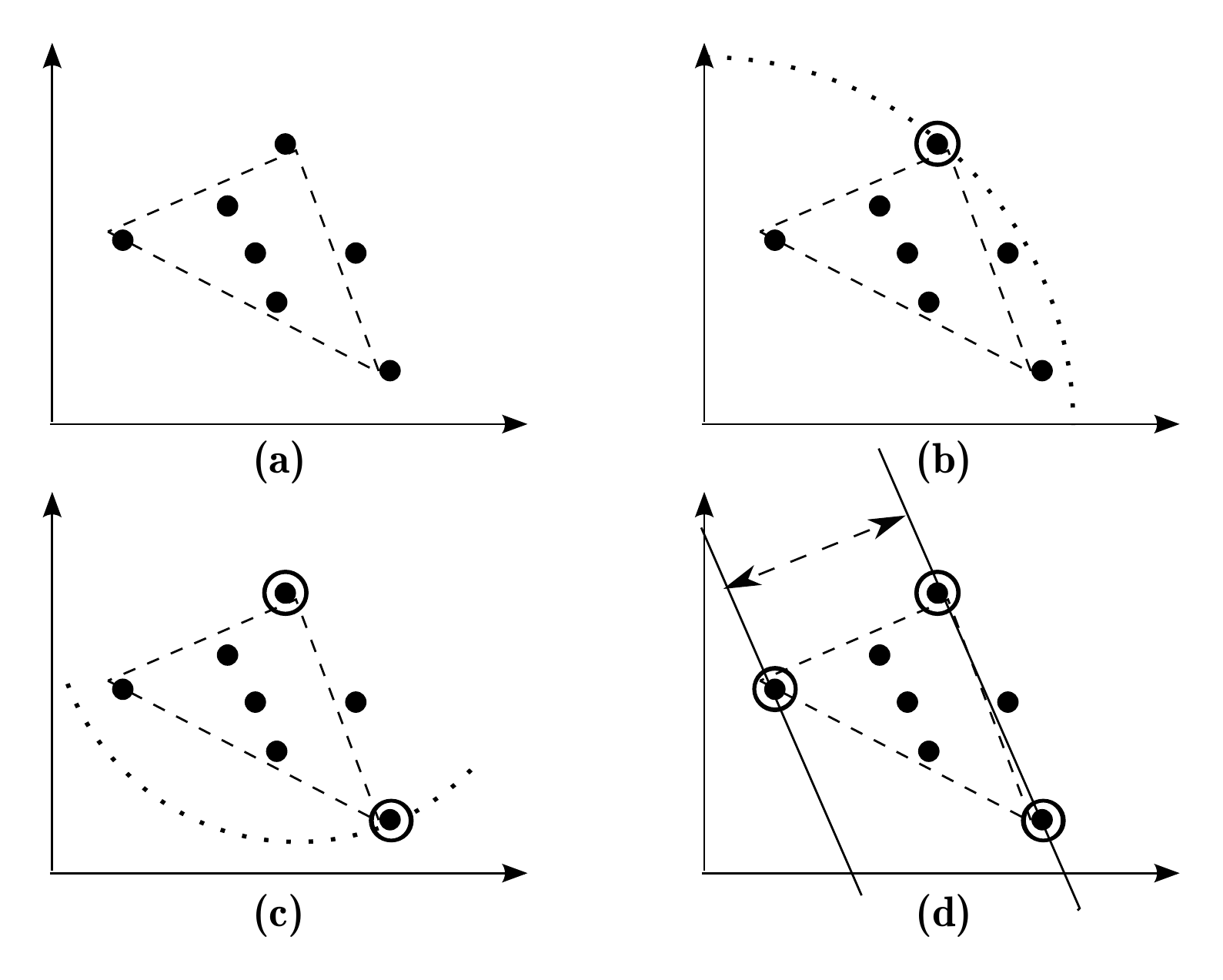}
\caption{Illustration of the Algorithm}
\end{center}
\end{figure}

Now we are ready to prove Lemma~\ref{lem:perturbation}.

\begin{proof} 
The first case is for the first step of the algorithm, when we try to find the farthest point to the origin. Here
essentially $S = \{\vec{0}\}$. For any two vertices $v_1, v_2$, since the simplex is $\gamma$ robust, the distance between
$v_1$ and $v_2$ is at least $\gamma$. Which means $\dis(\vec{0},v_1)+\dis(\vec{0},v_2) \ge \gamma$, one of them
must be at least $\gamma/2$.

For the later steps, recall that $S$ contains vertices of a perturbed simplex. Let $S'$ be the set of original vertices corresponding to the perturbed vertices in $S$. Let $v$ be any vertex in $\{v_1, v_2, ..., v_K\}$ which is not in $S$. Now we know the distance between $v$ and $S$ is equal to $\mbox{vol}(S\cup\{v\})/ (|S|-1) \mbox{vol}(S)$.  On the other hand, we know $\mbox{vol}(S'\cup\{v\})/ (|S'|-1) \mbox{vol}(S') \ge \gamma$. Using Lemma~\ref{lem:volperturbation} to bound the ratio between the two pairs $\mbox{vol}(S)/\mbox{vol}(S')$ and $\mbox{vol}(S\cup\{v\})/\mbox{vol}(S'\cup\{v\})$, we get:
$$\dis(v,S) \ge (1-4\epsilon'/\gamma)^{2|S|-2} \gamma > \gamma/2$$
when $\gamma > 20K \epsilon'$.
\end{proof}

Lemma~\ref{lem:inductionforfastalg} is based on the following observation: in a simplex the point with largest $\ell_2$ is always a vertex. Even if two vertices have the same norm if they are not close to each other the vertices on the edge connecting them will have significantly lower norm.

\begin{proof} (Lemma~\ref{lem:inductionforfastalg}) 

Since $d_j$ is the point found by the algorithm, let us consider the point $a_j$ before perturbation. The point $a_j$ is inside the simplex, therefore we can write $a_j$ as a convex combination of the vertices:

$$ a_j = \sum_{t = 1}^K c_t v_t$$

Let $v_t$ be the vertex with largest coefficient $c_t$. Let $\Delta$ be the largest distance from some vertex to the space spanned by points in $S$ ($\Delta = \max_l \dis(v_l, \spn(S))$. By Lemma~\ref{lem:perturbation} we know $\Delta > \gamma/2$. Also notice that we are not assuming $\dis(v_t, \spn(S)) = \Delta$.

Now we rewrite $a_j$ as $c_t v_t + (1-c_t) w$, where $w$ is a vector in the convex hull of vertices other than $v_t$. 
Observe that $a_j$ must be far from $\spn(S)$, because $d_j$ is the farthest point found by the algorithm. Indeed:
$$ \dis(a_j, \spn(S))  \ge \dis(d_j,\spn(S)) - \epsilon \ge \dis(v_l, \spn(S)) - 2\epsilon \ge \Delta-2\epsilon$$

The second inequality is because there must be some point $d_l$ that correspond to the farthest vertex $v_l$ and have $\dis(d_l, \spn(S)) \ge \Delta-\epsilon$. Thus as $d_j$ is the farthest point $\dis(d_j, \spn(S)) \ge \dis(d_l, \spn(S)) \ge \Delta-\epsilon$. 

The point $a_j$ is on the segment connecting $v_t$ and $w$, the distance between $a_j$ and $\spn(S)$ is not much smaller than that of $v_t$ and $w$. Following the intuition in $\ell_2$ norm when $v_t$ and $w$ are far we would expect $a_j$ to be very close to either $v_t$ or $w$. Since $c_t \ge 1/K$ it cannot be really close to $w$, so it must be really close to $v_t$. We formalize this intuition by the following calculation (see Figure~\ref{fig:twodimensional}):

\begin{figure}
\begin{center}
\includegraphics[width=3.5in]{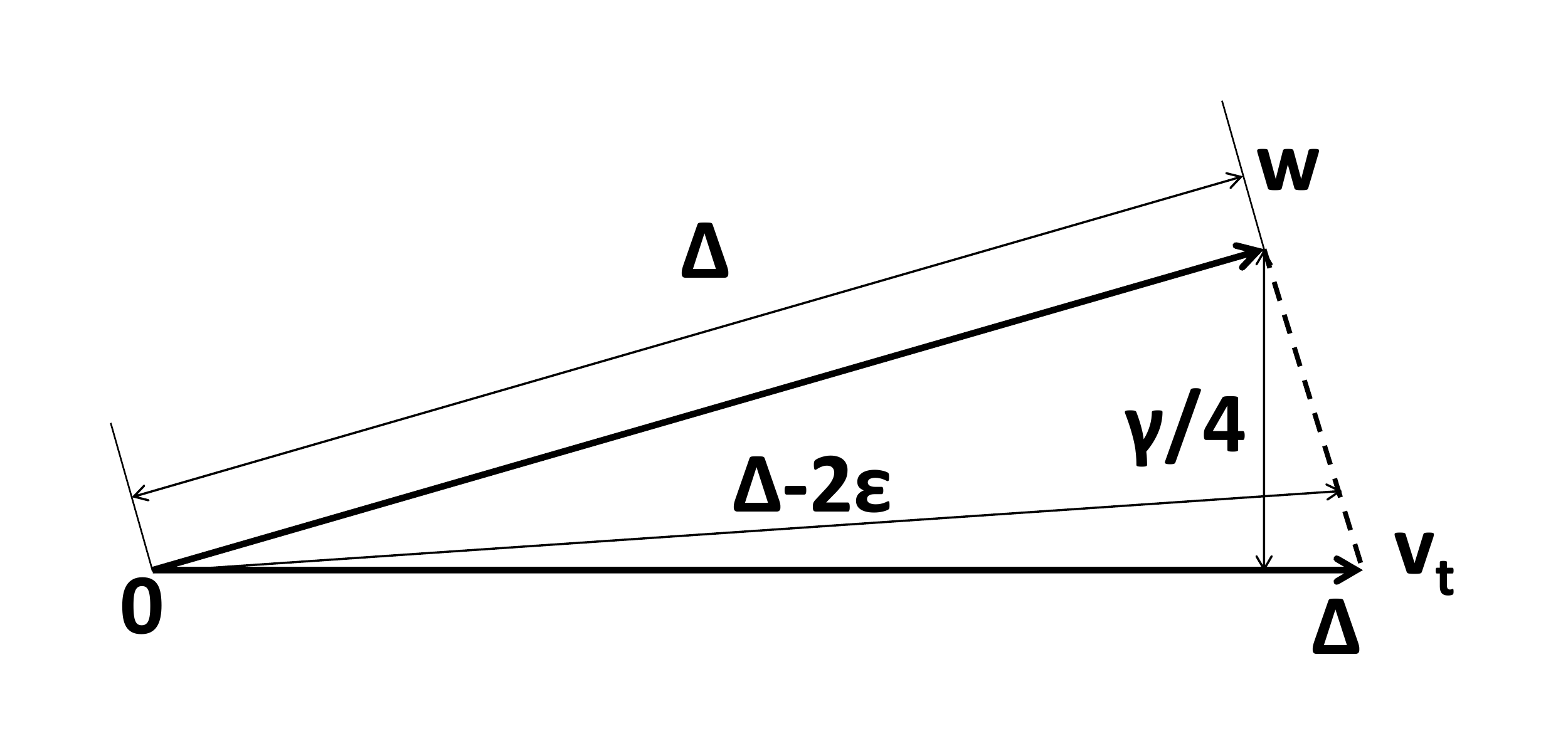}
\caption{Proof of Lemma~\ref{lem:inductionforfastalg}, after projecting to the orthogonal subspace of $\spn(S)$.}
\label{fig:twodimensional}
\end{center}
\end{figure}

Project everything to the orthogonal subspace of $\spn(S)$ (points in $\spn(S)$ are now at the origin). After projection distance to $\spn(S)$ is just the $\ell_2$ norm of a vector. Without loss of generality we assume $\norm{v_t}_2 = \norm{w}_2 = \Delta$ because these two have length at most $\Delta$, and extending these two vectors to have length $\Delta$ can only increase the length of $d_j$. 

The point $v_t$ must be far from $w$ by applying Lemma~\ref{lem:perturbation}: consider the set of vertices $V' = \{v_i: v_i\mbox{ does not correspond to any point in $S$ and } i\ne t\}$. The set $V'\cup S$ satisfy the assumptions in Lemma~\ref{lem:perturbation} so there must be one vertex that is far from $\spn(V'\cup S)$, and it can only be $v_t$. Therefore even after projecting to orthogonal subspace of $\spn(S)$, $v_t$ is still far from any convex combination of $V'$. The vertices that are not in $V'$ all have very small norm after projecting to orthogonal subspace (at most $\delta_0$) so we know the distance of $v_t$ and $w$ is at least $\gamma/2-\delta_0 > \gamma/4$.

Now the problem becomes a two dimensional calculation. When $c_t$ is fixed the length of $a_j$ is strictly increasing when the distance of $v_t$ and $w$ decrease, so we assume the distance is $\gamma/4$. Simple calculation (using essentially just pythagorean theorem) shows

$$ c_t(1-c_t) \le \frac{\epsilon}{\Delta-\sqrt{\Delta^2-\gamma^2/16}}. $$

The right hand side is largest when $\Delta = 2$ (since the vectors are in unit ball) and the maximum value is $O(\epsilon/\gamma^2)$. When this value is smaller than $1/K$, we must have $1-c_t \le O(\epsilon/\gamma^2)$. Thus $c_t \ge 1-O(\epsilon/\gamma^2)$ and $\delta \le (1-c_t) + \epsilon \le O(\epsilon/\gamma^2)$. 
\end{proof}

The cleanup phase tries to find the farthest point to a subset of $K-1$ vertices, and use that point as the $K$-th vertex. This will improve the result because when we have $K-1$ points close to $K-1$ vertices, only one of the vertices can be far from their span. Therefore the farthest point must be close to the only remaining vertex. Another way of viewing this is that the algorithm is trying to greedily maximize the volume of the simplex, which makes sense because the larger the volume is, the more words/documents the final LDA model can explain.

The following lemma makes the intuitions rigorous and shows how cleanup improves the guarantee of Lemma~\ref{lem:inductionforfastalg}.

\begin{lemma}\label{lem:cleanup}
Suppose $|S| = K-1$ and each point in $S$ is $\delta = O(\epsilon/\gamma^2) < \gamma/20K$ close to some vertex $v_i$, then the farthest point $v_j'$ found by the algorithm is $1 - O(\epsilon/\gamma)$ close to the remaining vertex.
\end{lemma}


\begin{proof}
We still look at the original point $a_j$ and express it as $\sum_{t=1}^K c_tv_t$. Without loss of generality let $v_1$ be the vertex that does not correspond to anything in $S$. By Lemma~\ref{lem:perturbation} $v_1$ is $\gamma/2$ far from $\spn(S)$. On the other hand all other vertices are at least $\gamma/20r$ close to $\spn(S)$. We know the distance $\dis(a_j, \spn(S)) \ge \dis(v_1, \spn(S)) - 2\epsilon$, this cannot be true unless $c_t \ge 1 - O(\epsilon/\gamma)$.
\end{proof}

These lemmas directly lead to the following theorem:

\begin{theorem}
\label{thm:fastanchorwords}
FastAnchorWords algorithm runs in time $\tilde{O}(V^2+VK/\epsilon^2)$ and outputs a subset of $\{d_1,...,d_V\}$ of size $K$ that $O(\epsilon/\gamma)$-covers the vertices provided that  $20 K \epsilon/\gamma^2 < \gamma$.
\end{theorem}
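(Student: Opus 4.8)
The plan is to assemble Theorem~\ref{thm:fastanchorwords} from the three lemmas already stated, so the work is mostly bookkeeping about how the error parameter $\delta$ evolves across the two phases of the algorithm and a separate running-time count. First I would handle the main phase. The algorithm starts with $S=\{d_i\}$ for the point $d_i$ farthest from the origin; by the first case of Lemma~\ref{lem:perturbation} (with $S$ effectively $\{\vec 0\}$) some vertex is at distance $\ge\gamma/2$ from the origin, and then Lemma~\ref{lem:inductionforfastalg} says the farthest point $d_j$ actually chosen is $\delta=O(\epsilon/\gamma^2)$-close to some vertex. I would then argue by induction on the size of $S$: assuming $S$ currently consists of points each $O(\epsilon/\gamma^2)$-close to distinct vertices and that $\delta<\gamma/20K$ (which is exactly the hypothesis $20K\epsilon/\gamma^2<\gamma$ up to constants), Lemma~\ref{lem:perturbation} guarantees a not-yet-covered vertex lies $\ge\gamma/2$ from $\spn(S)$, and Lemma~\ref{lem:inductionforfastalg} guarantees the next point added is again $O(\epsilon/\gamma^2)$-close to a vertex. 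The crucial point to emphasize is that the bound $O(\epsilon/\gamma^2)$ in Lemma~\ref{lem:inductionforfastalg} depends only on $\epsilon,\gamma$ and not on the $\delta$ of the previous step, so the error does not compound; after $K-1$ iterations we have $K$ points, each close to a distinct vertex, with the $K$ vertices all distinct because two points close to the same vertex would contradict $\gamma$-robustness together with $\delta<\gamma/20K$.

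Second I would run the cleanup phase. After the main phase, $S=\{v_1',\dots,v_K'\}$ with each $v_i'$ being $O(\epsilon/\gamma^2)$-close to a distinct vertex. For each $i$, removing $v_i'$ leaves $K-1$ points each $O(\epsilon/\gamma^2)$-close to distinct vertices, so Lemma~\ref{lem:cleanup} applies verbatim and tells us the new $v_i'$ chosen (farthest from $\spn(\{v_1',\dots,v_K'\}\setminus\{v_i'\})$) is $1-O(\epsilon/\gamma)$-close to the one remaining vertex, which is exactly $\epsilon'$-coverage for $\epsilon'=O(\epsilon/\gamma)$ in the sense of the covering definition. One small thing to check is that the updates are done sequentially and an update to $v_i'$ does not spoil the hypothesis for the later $v_j'$: since after each update the point is even closer to its vertex than $O(\epsilon/\gamma^2)$, the invariant ``$K-1$ points close to $K-1$ distinct vertices'' is maintained throughout the loop, so each application of Lemma~\ref{lem:cleanup} is legitimate. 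Hence the returned set $O(\epsilon/\gamma)$-covers the vertices.

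Third I would account for the running time. The random projection to a $4\log V/\epsilon^2$-dimensional subspace takes $\tilde O(V^2/\epsilon^2)$ — or, as the footnote notes, $O(V^2)$ with the dimension fixed to a constant like $1000$ — and by the Johnson--Lindenstrauss lemma it preserves all pairwise distances and distances-to-subspaces up to $(1\pm\epsilon)$, which is all the geometry the lemmas use, so correctness is unaffected. Each of the $K-1$ iterations of the main phase, and each of the $K$ iterations of cleanup, computes the distance from all $V$ points to a subspace of dimension $\le K$ living in $O(\log V/\epsilon^2)$ coordinates; maintaining an orthonormal basis and projecting incrementally costs $O(V\cdot \log V/\epsilon^2)$ per iteration (or $O(VK)$ with fixed dimension $1000$, matching the footnote), for a total of $\tilde O(VK/\epsilon^2)$. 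Adding the two contributions gives $\tilde O(V^2+VK/\epsilon^2)$.

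The main obstacle is not any single calculation — the lemmas do the heavy lifting — but rather getting the induction for the main phase airtight: one must be careful that the hypotheses of Lemma~\ref{lem:perturbation} and Lemma~\ref{lem:inductionforfastalg} (namely that $S$ consists of points $\delta$-close to \emph{distinct} vertices with $\delta<\gamma/20K$) are genuinely re-established at the start of every iteration, and that the constant hidden in $O(\epsilon/\gamma^2)$ stays fixed across all $K$ steps rather than growing with the iteration count. Tracking the constants so that the single clean hypothesis $20K\epsilon/\gamma^2<\gamma$ suffices to drive the whole argument (both phases, and the distinctness of the covered vertices) is the part that needs the most care.
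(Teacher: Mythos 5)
Your proposal is correct and takes essentially the same route as the paper, whose own proof is just a two-line assembly of Lemmas~\ref{lem:perturbation}, \ref{lem:inductionforfastalg}, and \ref{lem:cleanup}: induction with Lemma~\ref{lem:inductionforfastalg} over the first phase, then Lemma~\ref{lem:cleanup} to sharpen $O(\epsilon/\gamma^2)$ to $O(\epsilon/\gamma)$. The extra care you take with re-establishing the lemmas' hypotheses each iteration, the distinctness of the covered vertices, and the running-time accounting goes beyond what the paper writes down but is entirely consistent with it.
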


\begin{proof}
In the first phase of the algorithm, do induction using Lemma~\ref{lem:inductionforfastalg}. When $20 K \epsilon/\gamma^2 < \gamma$ Lemma~\ref{lem:inductionforfastalg} shows that we find a set of points that $O(\epsilon/\gamma^2)$-covers
the vertices. Now Lemma~\ref{lem:cleanup} shows after cleanup phase the points are refined to $O(\epsilon/\gamma)$-cover the vertices. 
\end{proof}

\section{Proof for Nonnegative Recover Procedure}

In order to show RecoverL2 learns the parameters even when the rows of $\bar{Q}$ are perturbed, we need the
following lemma that shows when columns of $\bar{Q}$ are close to the expectation, the posteriors $c$ computed by the algorithm is
also close to the true value.

\begin{lemma}
\label{lem:recoverposterior}
For a $\gamma$ robust simplex $S$ with vertices $\{v_1, v_2, ..., v_K\}$, let $v$ be a point in the simplex that can be represented
as a convex combination $v = \sum_{i=1}^K c_i v_i$. If the vertices of $S$ are perturbed to $S' = \{..., v_i', ...\}$ where
$\norm{v_i' - v_i} \le \delta_1$ and $v$ is perturbed to $v'$ where $\norm{v-v'} \le \delta_2$. Let $v^*$
be the point in $S'$ that is closest to $v'$, and $v^* = \sum_{i=1}^K c_i' v_i$, when $10\sqrt{K}\delta_1\le \gamma$ for all $i\in[K]$
$|c_i-c_i'| \le 4(\delta_1+\delta_2)/\gamma$.
\end{lemma}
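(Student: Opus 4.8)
The plan is to reduce the statement about coefficients to a statement about points, and then use $\gamma$-robustness of $S$ to pass back. The only coefficient vectors in play are probability vectors, so everything is controlled by the geometry of the simplex.

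First I would exhibit a point of $S'$ that carries exactly the coefficients $c$ and is close to both $v$ and $v'$: set $\hat v = \sum_i c_i v_i'$. Since $c_i \ge 0$ and $\sum_i c_i = 1$ this lies in $S'$, and $\|\hat v - v\| = \|\sum_i c_i (v_i' - v_i)\| \le \delta_1$, hence $\|\hat v - v'\| \le \delta_1 + \delta_2$. Because $v^*$ is the closest point of the convex set $S'$ to $v'$ and $\hat v \in S'$, the obtuse-angle property of Euclidean projection onto a convex set gives $\|v^* - \hat v\| \le \|v' - \hat v\| \le \delta_1 + \delta_2$ (a bare triangle inequality via $\|v^* - v'\| \le \|\hat v - v'\|$ also works, losing a factor $2$). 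Next I would return from the perturbed vertices to the original ones: writing $v^* = \sum_i c_i' v_i'$ and $\hat v = \sum_i c_i v_i'$ we get $\sum_i (c_i' - c_i) v_i = (v^* - \hat v) - \sum_i (c_i' - c_i)(v_i' - v_i)$, and since both $c$ and $c'$ are probability vectors $\sum_i |c_i' - c_i| \le 2$, so $\big\|\sum_i (c_i' - c_i) v_i\big\| \le (\delta_1+\delta_2) + 2\delta_1 \le 4(\delta_1+\delta_2)$, while of course $\sum_i (c_i' - c_i) = 0$.

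The last step — and the one I expect to be the real obstacle — is to convert a bound on $\big\|\sum_i \mu_i v_i\big\|$ for $\mu := c' - c$ with $\sum_i \mu_i = 0$ into the per-coordinate bound $|\mu_i| \le \big\|\sum_j \mu_j v_j\big\|/\gamma$. Fixing $i$ with $\mu_i \ne 0$ and writing $\sum_j \mu_j v_j = \mu_i\,(v_i - z)$ where $z := \sum_{j \ne i} (-\mu_j/\mu_i) v_j$ is an affine combination of $\{v_j : j \ne i\}$, one has $\big\|\sum_j \mu_j v_j\big\| = |\mu_i|\,\|v_i - z\| \ge |\mu_i|\cdot \mathrm{dist}\!\big(v_i,\ \mathrm{aff}(\{v_j : j \ne i\})\big)$. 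So it suffices to show that $\gamma$-robustness of $S$ forces the distance from each vertex to the \emph{affine} hull of the remaining vertices to be at least $\gamma$; granting this, $|c_i - c_i'| = |\mu_i| \le 4(\delta_1+\delta_2)/\gamma$ for every $i$, which is the claim. This affine-width estimate is the delicate point (robustness is stated in terms of distance to the \emph{convex} hull of the other vertices, which is only an upper bound on the affine-hull distance), and it is where I would expect the hypothesis $10\sqrt{K}\,\delta_1 \le \gamma$ — already appearing in Lemma~\ref{lem:volperturbation} — to be needed, both to keep the perturbed configuration $S'$ nondegenerate and to stay in the regime where these perturbation bounds are linear. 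I would establish the affine-width estimate either directly (if the foot of the perpendicular from $v_i$ to that affine hull lies outside the opposite face, a neighboring vertex is forced close to \emph{its} opposite face, contradicting robustness) or, in the spirit of Lemma~\ref{lem:volperturbation}, through the identity $\mathrm{dist}(v_i, \mathrm{aff}(\{v_j\}_{j\ne i})) = (K-1)\,\mathrm{vol}(S)/\mathrm{vol}(\mathrm{conv}(\{v_j\}_{j\ne i}))$ together with a lower bound on $\mathrm{vol}(S)$ extracted from $\gamma$-robustness and the fact that the points lie in the unit ball.
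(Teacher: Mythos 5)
Your construction is the paper's construction. The point $\hat v=\sum_i c_i v_i'$ is exactly the paper's $u$; the bounds $\norm{\hat v-v}\le\delta_1$, $\norm{\hat v-v'}\le\delta_1+\delta_2$, and the control of $\norm{v^*-\hat v}$ are the same (your use of the obtuse-angle property of projection onto the convex set $S'$ even saves the paper's factor of $2$); and your per-coordinate extraction, writing $\sum_j\mu_j v_j=\mu_i(v_i-z)$ with $z$ an affine combination of the remaining vertices and dividing by the vertex-to-affine-hull distance, is precisely what the paper accomplishes by projecting $v^*$ and $u$ onto the orthogonal complement of $\spn(S'\setminus\{v_i'\})$. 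The only structural difference is that you transfer back to the unperturbed vertices at the cost of an extra $2\delta_1$, whereas the paper stays inside $S'$ and invokes the robustness of the perturbed simplex; these are interchangeable.

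You have also correctly isolated the one genuine difficulty, and here the news is bad: the implication you need cannot be proved, because convex-hull robustness does not lower-bound the affine width. Take $K=4$ with $v_1=(0,0,0)$, $v_2=(2,0,0)$, $v_3=(0,2,0)$, $v_4=(2,2,h)$. For every small $h>0$ each vertex is at distance roughly $\sqrt2$ from the convex hull of the other three (the nearest point is always near the midpoint of a diagonal edge), so the simplex is $1$-robust, yet $\dis(v_4,\mathrm{aff}(v_1,v_2,v_3))=h$ and $\mbox{vol}(S)=O(h)$ are arbitrarily small; barycentric coordinates really are unstable here ($\delta_1=0$, $v'=v+\delta_2 e_3$ changes $c_4$ by $\delta_2/h$), so the conclusion of the lemma fails for this simplex. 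This kills both of your proposed patches: the foot of the perpendicular from $v_4$ lies far outside the opposite face without forcing any neighbor close to its own face, and no volume lower bound follows from robustness. To be clear, this gap is the paper's as much as yours: its proof simply asserts $\dis(v_i',\spn(S'\setminus\{v_i'\}))\ge\gamma/2$, and its proof of Lemma~\ref{lem:volperturbation} already glosses $\gamma$-robustness as ``after orthogonalization each column still has length at least $\gamma$,'' i.e.\ as a lower bound on the distance from each vertex to the affine hull of the rest. That stronger (affine, or smallest-singular-value) reading of the hypothesis is what is actually needed; under it your final step is immediate and your argument goes through with the constant $3$ in place of $4$. Under the convex-hull definition as literally stated, no proof exists.
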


\begin{proof}
Consider the point $u = \sum_{i=1}^K c_i v_i'$, by triangle inequality: $\norm{u-v} \le \sum_{i=1}^K c_i \norm{v_i-v_i'} \le \delta_1$. Hence $\norm{u-v'} \le \norm{u-v}+\norm{v-v'} \le \delta_1+\delta_2$, and $u$ is in $S'$. The point $v^*$ is the 
point in $S'$ that is closest to $v'$, so $\norm{v^*-v'} \le \delta_1+\delta_2$ and $\norm{v^*-u} \le 2(\delta_1+\delta_2)$.

Then we need to show when a point ($u$) moves a small distance, its representation also changes by a small amount. Intuitively
this is true because $S$ is $\gamma$ robust. By Lemma~\ref{lem:perturbation} when $10\sqrt{K}\delta_1 < \gamma$, the simplex
$S'$ is also $\gamma/2$ robust. For any $i$, let $Proj_i(v^*)$ and $Proj_i(u)$ be the projections of $v^*$ and $u$ in the orthogonal
subspace of $\spn(S'\backslash v_i')$, then $$|c_i-c_i'| =\norm{Proj_i(v^*) - Proj_i(u)}/\dis(v_i,\spn(S'\backslash v_i'))
\le 4(\delta_1+\delta_2)/\gamma$$ and this completes the proof.
\end{proof}

With this lemma it is not hard to show that RecoverL2 has polynomial sample complexity.

\begin{theorem}
When the number of documents $M$ is at least $$\max\{O(aK^3\log V/D(\gamma p)^6\epsilon),O((aK)^3\log V/D\epsilon^3(\gamma p)^4)\}$$ our algorithm using the conjunction of FastAnchorWords and RecoverL2 learns the $A$ matrix with entry-wise error at most $\epsilon$.
\end{theorem}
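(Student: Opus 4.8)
The plan is to reduce the theorem to two sources of error: (i) the sampling error in estimating $Q$, which propagates to a row-wise $\ell_2$ perturbation of $\bar Q$, and (ii) the error introduced by FastAnchorWords and the convex-programming step RecoverL2, which is controlled by Theorem~\ref{thm:fastanchorwords} and Lemma~\ref{lem:recoverposterior}. First I would quantify (i): a standard concentration argument (Chernoff/Bernstein on the word-co-occurrence counts, using that each document contributes $D$ tokens and there are $M$ documents) shows that with $M$ as in the statement, every entry of $Q$ is estimated to additive accuracy roughly $\epsilon'$, and hence each row $\bar Q_i$ of the normalized matrix is within $\ell_2$ distance $\delta_2 = \tilde O(\epsilon' / p_w(i))$ of its expectation, where $p_w(i) = \prob(w_1 = i)$. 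The key subtlety is that rows corresponding to rare words are estimated poorly, so I would either restrict attention to words with $\prob(w_1=i)$ not too small or carry the $1/p_w$ factors through; the factor $p$ in the final bound and the factor $a$ (ratio of topic proportions) enter precisely here, since $p$-separability forces each anchor row's relevant mass to be at least $\sim p\alpha_k$, bounding how small the normalization constants can be.

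Next I would feed this into the anchor-finding guarantee: with a row-wise perturbation of size $\delta = \tilde O(\epsilon'/(\gamma p))$ (after accounting for the worst-case normalization constant) and $\gamma$-robustness of the true simplex $P$ (whose robustness is lower-bounded by topic-model parameters as remarked after the $\gamma$-robust definition), Theorem~\ref{thm:fastanchorwords} gives a set $\mathbf{S}$ of approximate anchor words that $O(\delta/\gamma)$-covers the true vertices, provided the smallness condition $20K\delta/\gamma^2 < \gamma$ holds — this is what forces the $\gamma^4$ (really $(\gamma p)^4$ after substitution) and the extra powers of $K$ in $M$. Then I would apply Lemma~\ref{lem:recoverposterior} with $\delta_1$ the vertex-perturbation (distance of the returned anchor rows from true vertices, $O(\delta/\gamma)$) and $\delta_2$ the row-perturbation of a generic $\bar Q_i$, to conclude that the recovered coefficients $C_{i,k}$, i.e. the estimates of $\prob(z_1 = k \mid w_1 = i)$, satisfy $|C_{i,k} - \prob(z_1=k\mid w_1=i)| = O((\delta_1+\delta_2)/\gamma)$. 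A minor point: the lemma is stated for the $\ell_2$-closest point in $S'$, which is exactly what RecoverL2 computes up to the optimization tolerance, so I would absorb the tolerance into $\delta_2$.

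Finally I would propagate the error through Bayes' rule. Writing $\hat A_{i,k} \propto C_{i,k}\,\hat p_w(i)$ and normalizing columns, I would show that an additive $\eta$ error in each $C_{i,k}$ together with a small multiplicative error in $\hat p_w(i)$ yields an $\ell_\infty$ (entry-wise) error of $O(\eta / \alpha_k) = O(a\eta)$ in $\hat A$, since the column normalizer $\sum_{i'} C_{i',k} p_w(i') = \prob(z_1 = k) = \alpha_k$ is bounded below by $1/(aK)$ up to constants. Setting the final target to $\epsilon$ and back-solving for $\epsilon'$, hence for $M$, produces the two-term maximum in the statement — one term from the ``coarse'' regime where the $20K\delta/\gamma^2 < \gamma$ constraint binds (giving the $\epsilon^3(\gamma p)^4$ denominator) and one from the ``fine'' regime where the final $\epsilon$-accuracy binds (giving the $(\gamma p)^6 \epsilon$ denominator), both carrying the $\log V$ from the union bound over $V^2$ entries and the $1/D$ from the per-document token budget. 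The main obstacle I anticipate is bookkeeping the dependence on the rare-word normalization constants cleanly: one must argue that the words that matter — the anchor words and the high-probability words in each topic — all have $\prob(w_1=i)$ bounded below in terms of $p$ and $a$, so that the blow-up from dividing by $p_w(i)$ in step (i) is controlled; getting the exact powers of $p$, $\gamma$, $K$, and $a$ to match the claimed bound is where the real care is needed, whereas the concentration and the geometric perturbation arguments are routine given the earlier lemmas.
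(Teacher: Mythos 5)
Your proposal follows essentially the same route as the paper's proof sketch: entry-wise concentration of $Q$ (after discarding words with probability below roughly $\epsilon/4aK$), conversion to a row-wise $\ell_2$ perturbation of $\bar{Q}$ by dividing by the normalization constants, Theorem~\ref{thm:fastanchorwords} under the condition $20K\delta_2/(\gamma p)^2 < \gamma p$ to get $\delta_1$-accurate anchors, Lemma~\ref{lem:recoverposterior} for the error in $C$, and propagation through Bayes' rule using $\alpha_k \geq 1/aK$, with the two regimes yielding the two terms in the max. The only nit is that dividing by $\alpha_k$ gives an $O(aK\eta)$ (not $O(a\eta)$) blow-up, a bookkeeping point you already flag as needing care.
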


\begin{proof} (sketch)
We can assume without loss of generality that
each word occurs with probability at least $\epsilon/4aK$  and furthermore that if $M$ is
at least $50\log V/D\epsilon_Q^2$ then the empirical matrix $\tilde{Q}$ is entry-wise within an additive
$\epsilon_Q$ to the true $Q = \frac{1}{M} \sum_{d=1}^M A W_d W_d^TA^T$ see \citep{AGM} for the details.
Also, the $K$ anchor rows of $\bar{Q}$ form a simplex that is $\gamma p$ robust.

The error in each column of $\bar{Q}$ can be at most $\delta_2 = \epsilon_Q \sqrt{4aK/\epsilon}$. By Theorem~\ref{thm:fastanchorwords} when $20K\delta_2/(\gamma p)^2 < \gamma p$
(which is satisfied when $M = O(aK^3\log V/D(\gamma p)^6\epsilon)$)
, the anchor words found
are $\delta_1 = O(\delta_2/(\gamma p))$ close to the true anchor words. Hence by Lemma~\ref{lem:recoverposterior} every entry of $C$ has error at most $O(\delta_2/(\gamma p)^2)$.

With such number of documents, all the word probabilities $p(w = i)$ are estimated more accurately than the entries of
$C_{i,j}$, so we omit their perturbations here for simplicity. When we apply the Bayes rule, we know $A_{i,k} = C_{i,k}p(w=i)/p(z=k)$, where $p(z=k)$ is $\alpha_k$ which is lower bounded by $1/aK$. The numerator and denominator are all related to entries of $C$ with positive coefficients sum up to at most 1. Therefore the errors $\delta_{num}$ and $\delta_{denom}$ are at most the error of a single entry of $C$, which is bounded by $O(\delta_2/(\gamma p)^2)$. Applying Taylor's Expansion to $(p(z=k,w=i)+\delta_{num})/ (\alpha_k + \delta_{denom})$, the error on entries of $A$ is at most 
$O(aK\delta_2/(\gamma p)^2)$. When $\epsilon_Q \le O((\gamma p)^2 \eps^{1.5}/(aK)^{1.5})$, we have $O(aK\delta_2/(\gamma p)^2) \le \epsilon$, and get the desired accuracy of $A$. The number of document required is
$M = O((aK)^3\log V/D\epsilon^3(\gamma p)^4)$.

The sample complexity for $R$ can then be bounded using matrix perturbation theory.
\end{proof}

\section{Empirical Results}

This section contains plots for $\ell_1$, held-out probability, coherence, and uniqueness for all semi-synthetic data sets.
Up is better for all metrics except $\ell_1$ error.

\begin{figure}[h]
\begin{center}
\includegraphics[scale=0.35]{nytl1_error.pdf}
\includegraphics[scale=0.35]{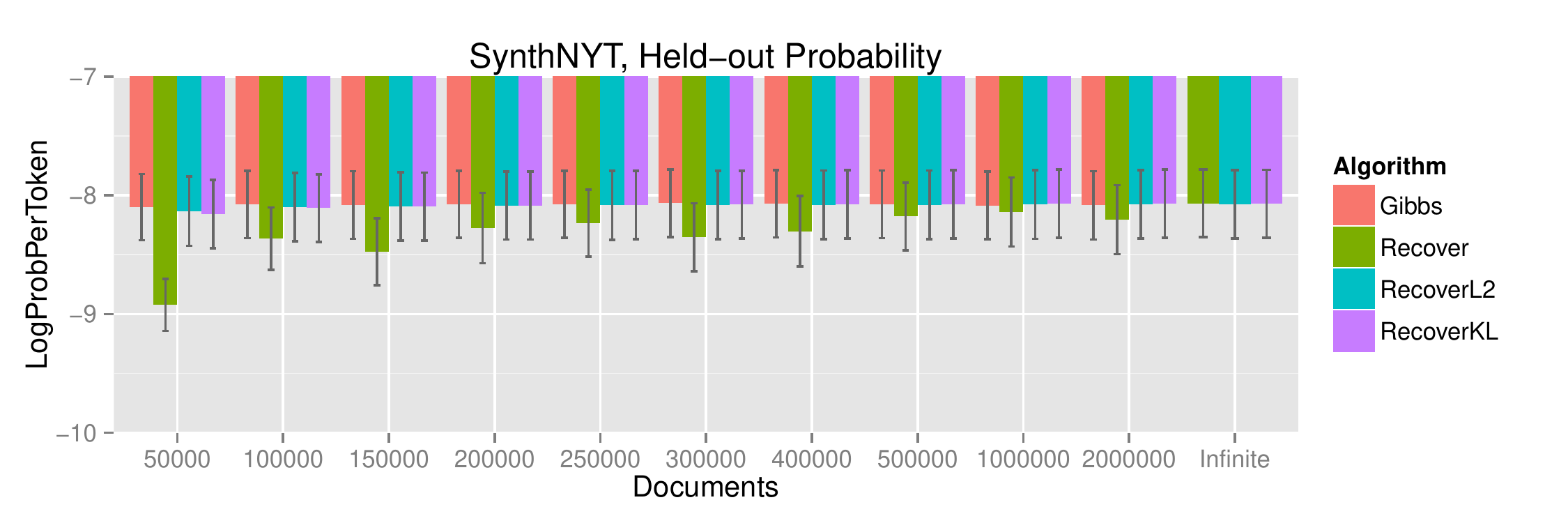}
\includegraphics[scale=0.35]{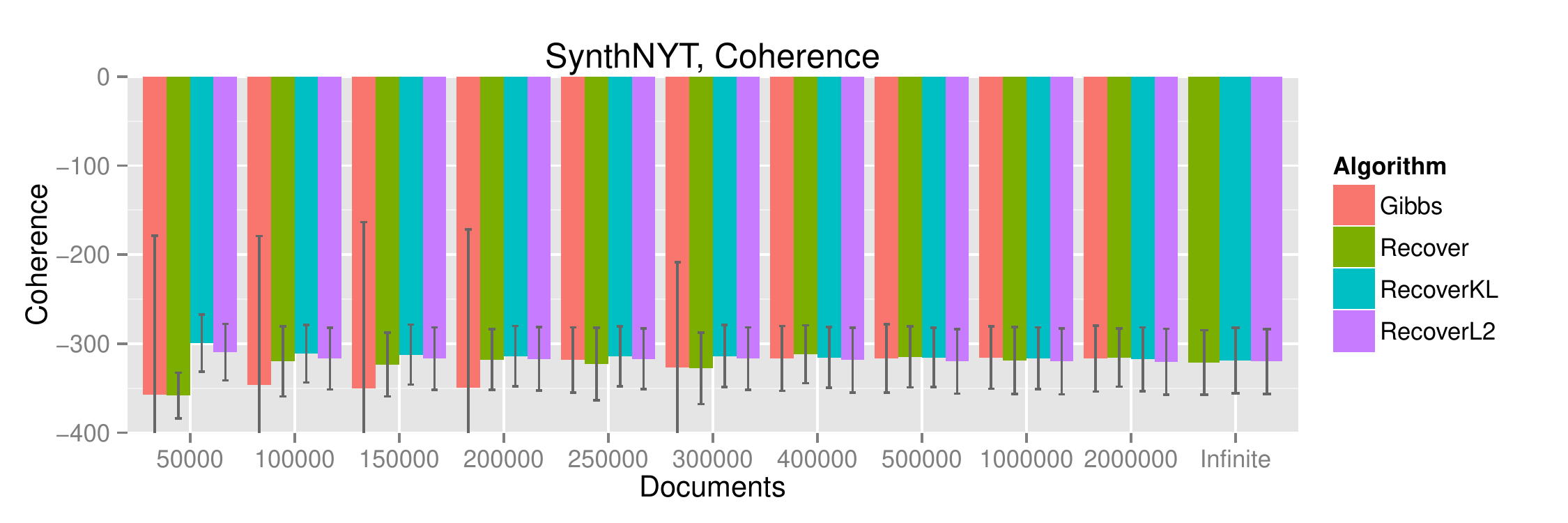}
\includegraphics[scale=0.35]{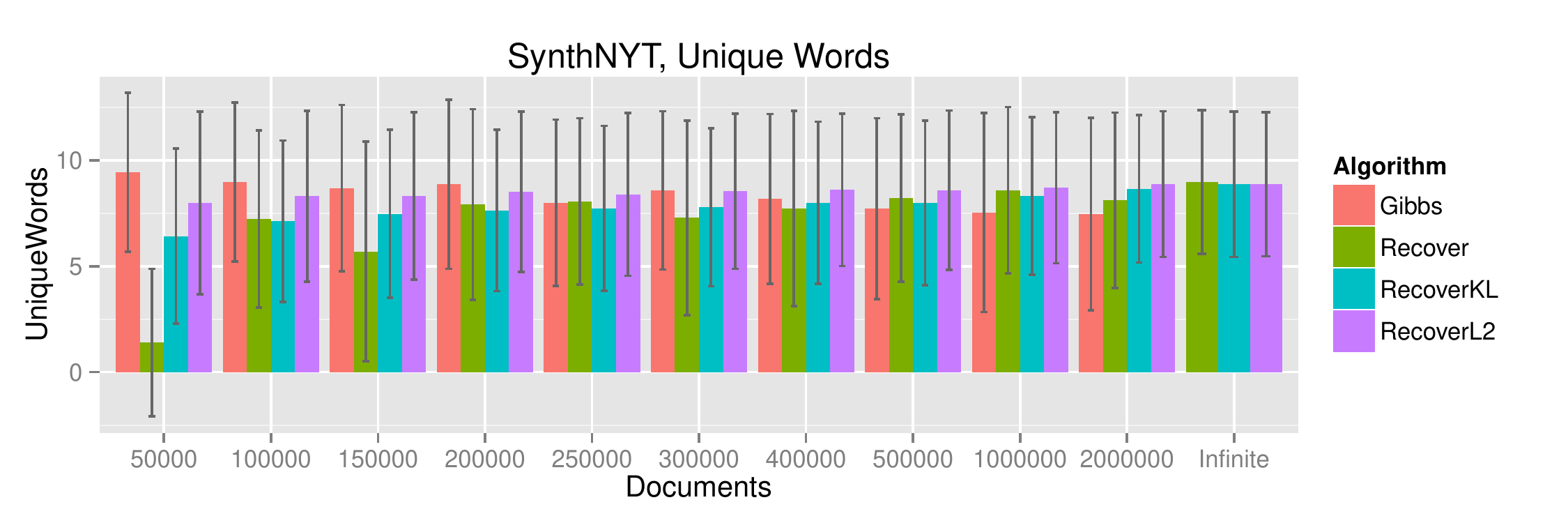}
\caption{Results for a semi-synthetic model generated from a model trained on NY Times articles with $K=100$.}
\end{center}
\end{figure}

\begin{figure}[h]
\begin{center}
\includegraphics[scale=0.35]{addedl1_error.pdf}
\includegraphics[scale=0.35]{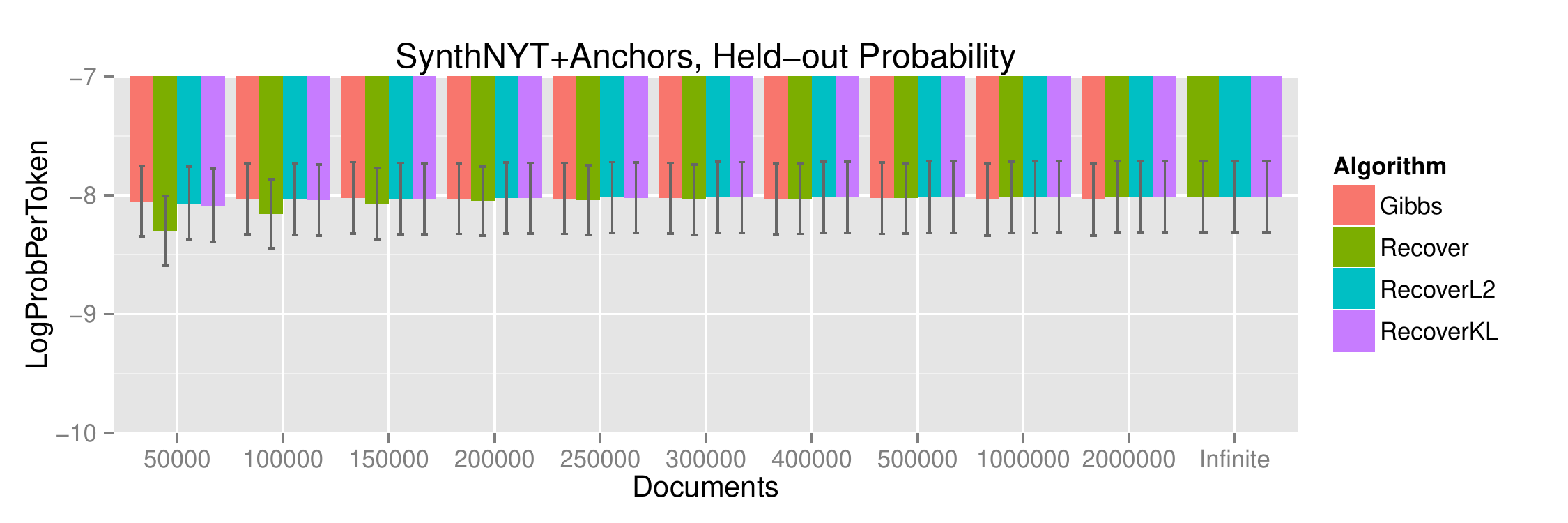}
\includegraphics[scale=0.35]{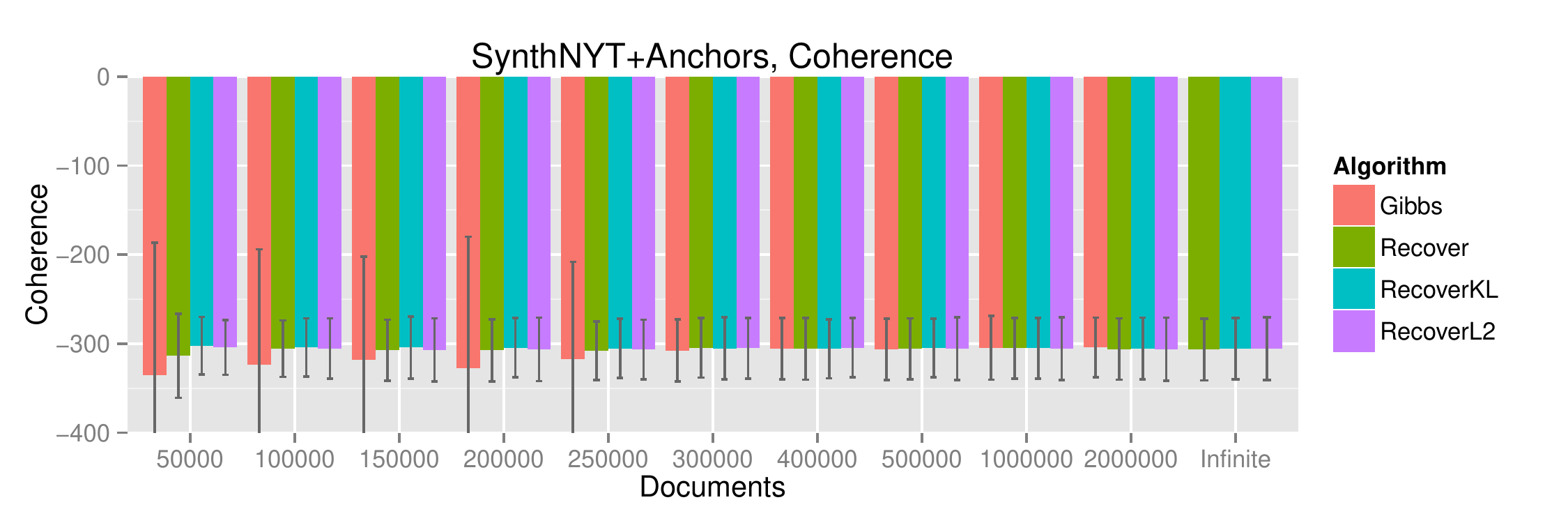}
\includegraphics[scale=0.35]{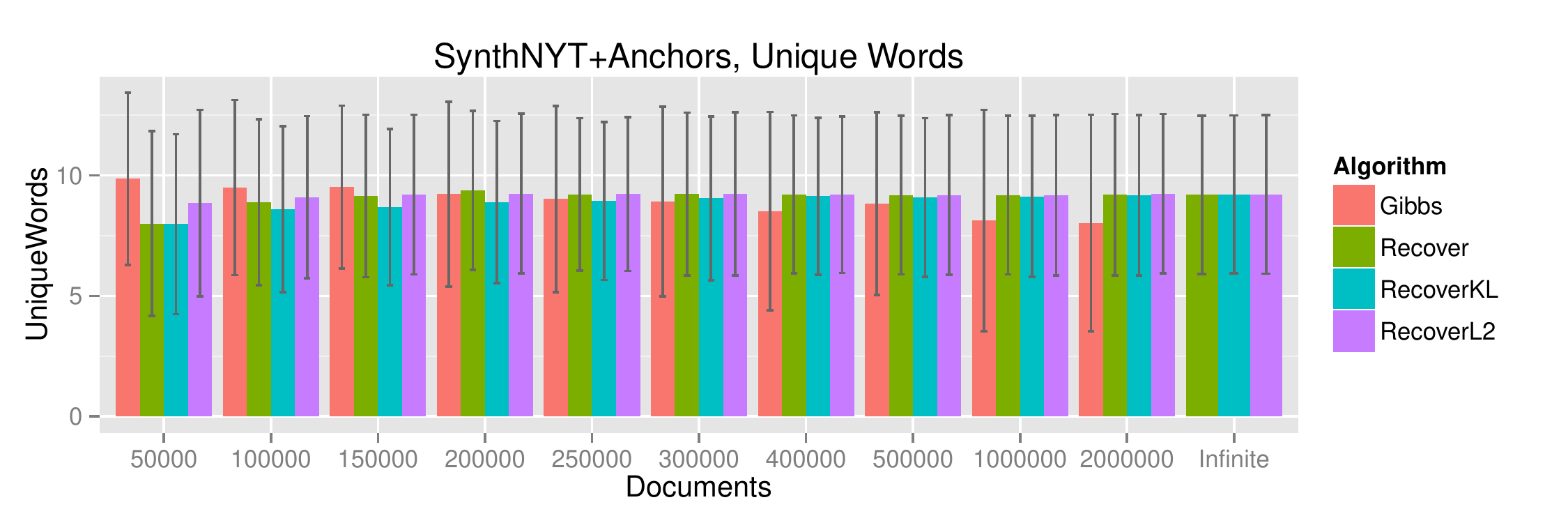}
\caption{Results for a semi-synthetic model generated from a model trained on NY Times articles with $K=100$, with a synthetic anchor word added to each topic.}
\end{center}
\end{figure}

\begin{figure}[h]
\begin{center}
\includegraphics[scale=0.35]{corr4l1_error.pdf}
\includegraphics[scale=0.35]{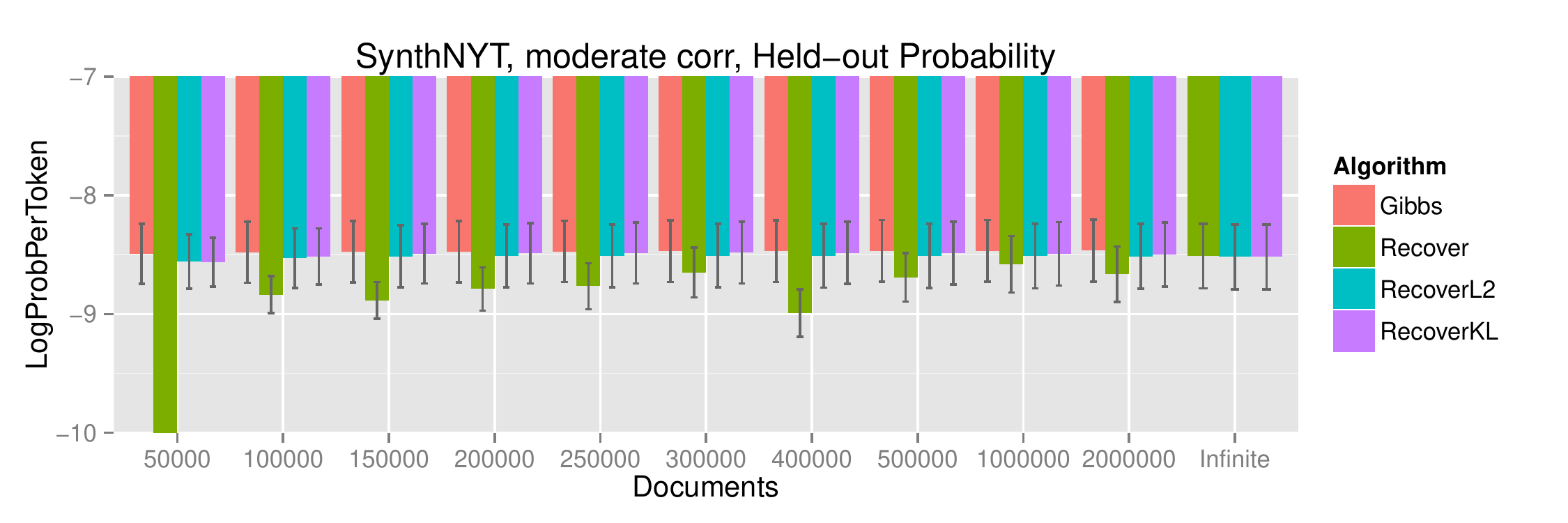}
\includegraphics[scale=0.35]{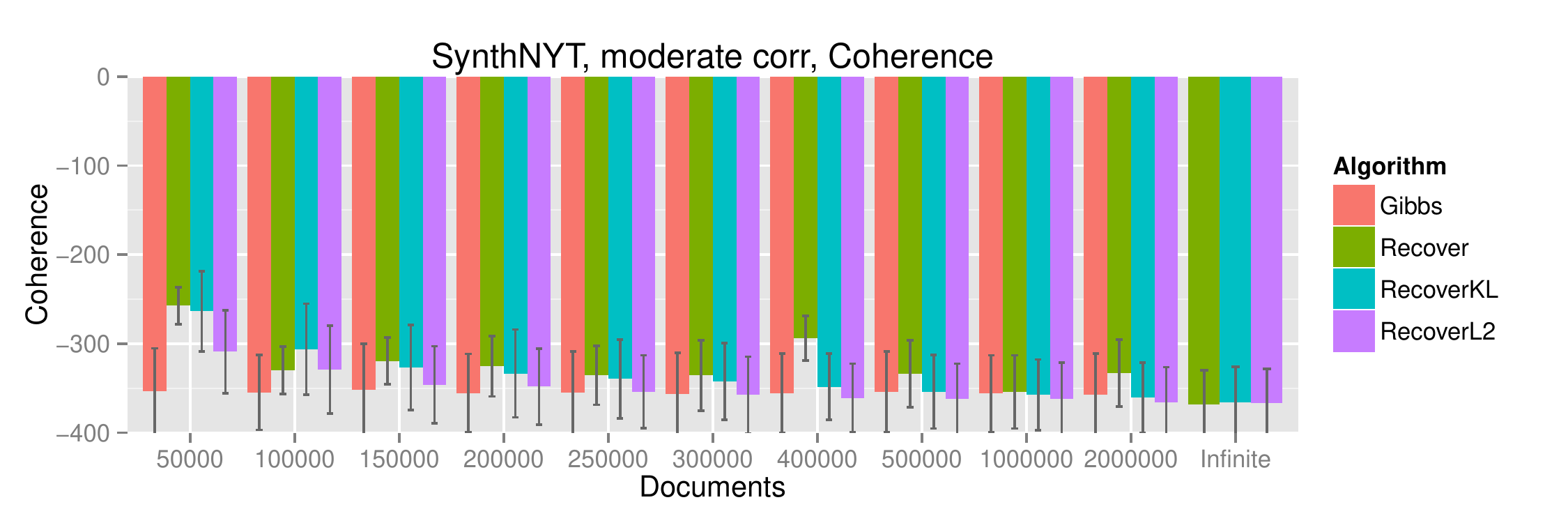}
\includegraphics[scale=0.35]{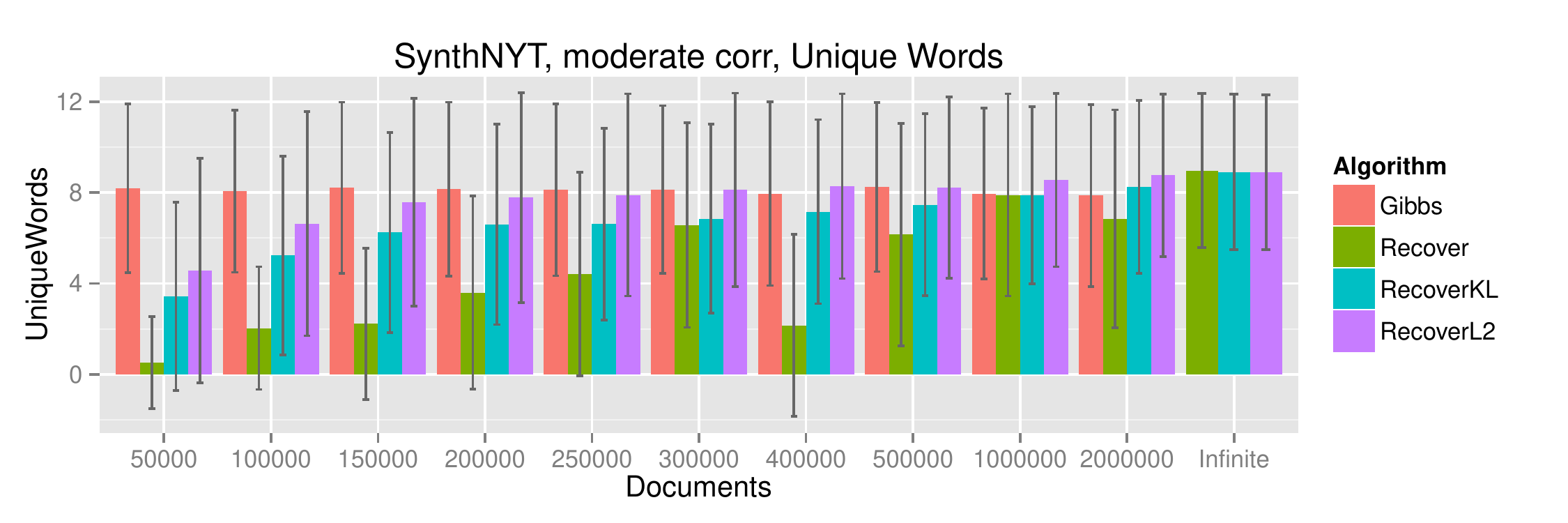}
\caption{Results for a semi-synthetic model generated from a model trained on NY Times articles with $K=100$, with moderate correlation between topics.}
\end{center}
\end{figure}

\begin{figure}[h]
\begin{center}
\includegraphics[scale=0.35]{corr2l1_error.pdf}
\includegraphics[scale=0.35]{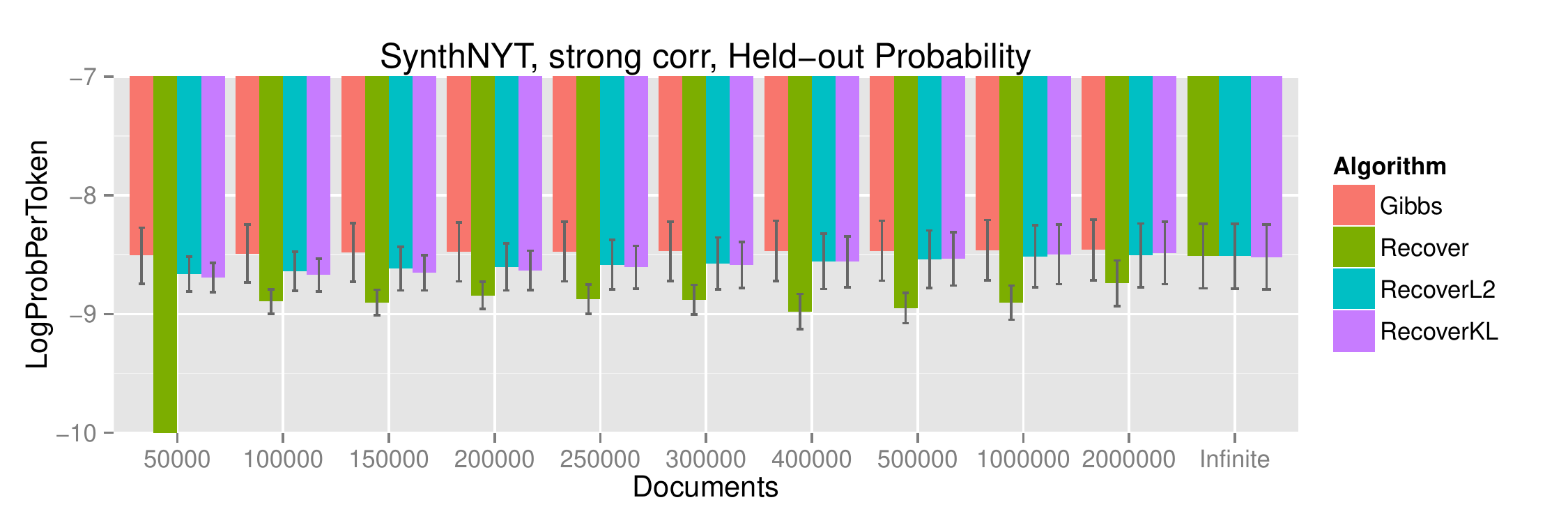}
\includegraphics[scale=0.35]{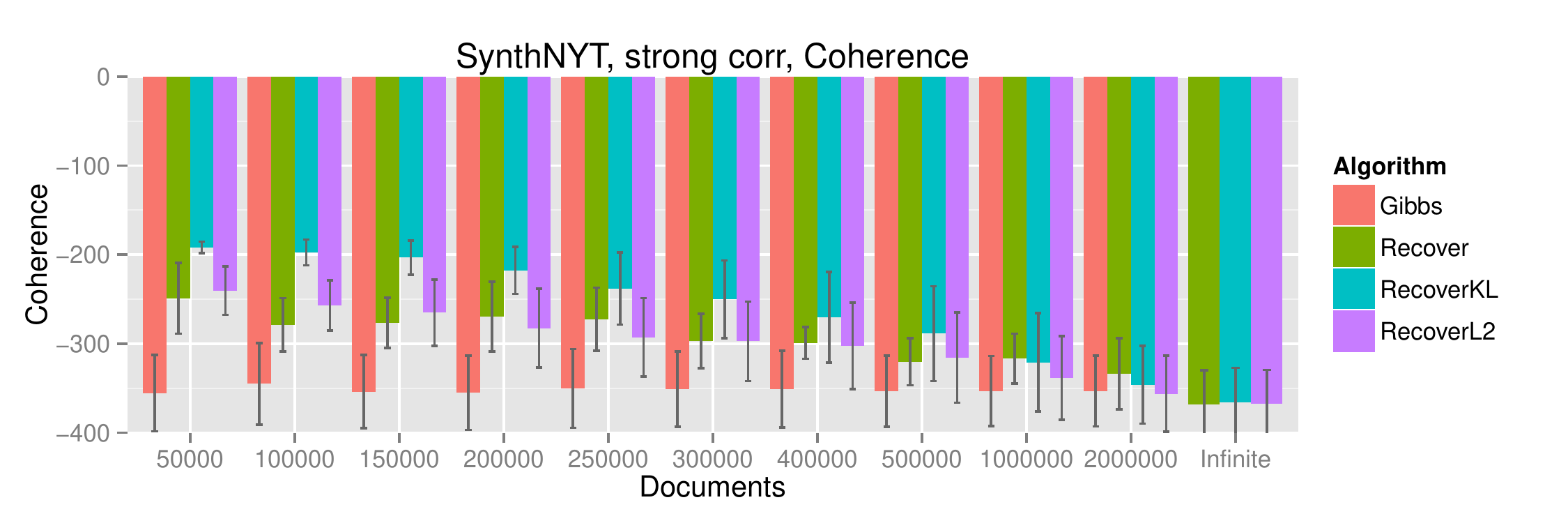}
\includegraphics[scale=0.35]{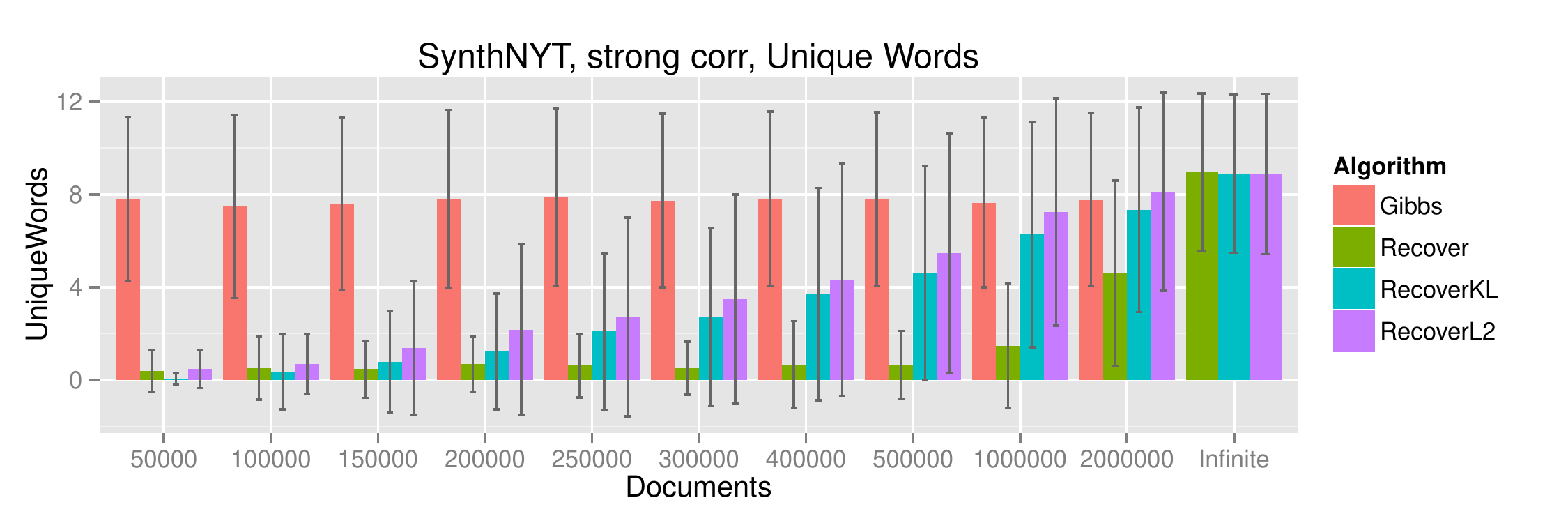}
\caption{Results for a semi-synthetic model generated from a model trained on NY Times articles with $K=100$, with stronger correlation between topics.}
\end{center}
\end{figure}

\begin{figure}[h]
\begin{center}
\includegraphics[scale=0.35]{nipsl1_error.pdf}
\includegraphics[scale=0.35]{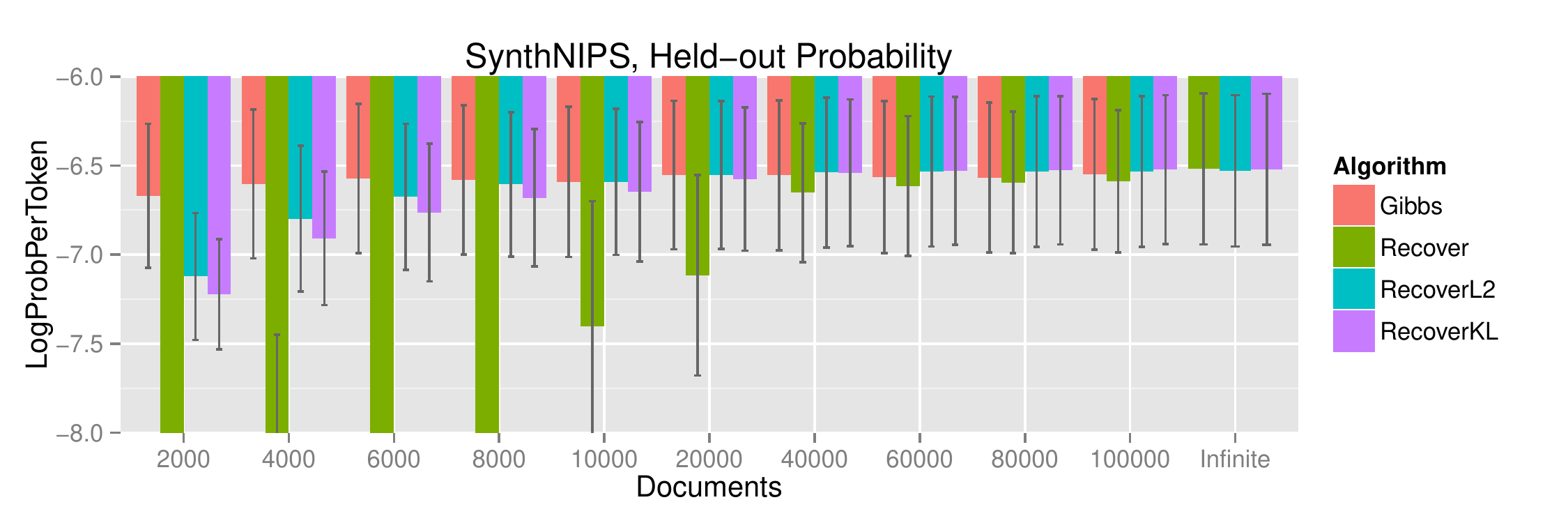}
\includegraphics[scale=0.35]{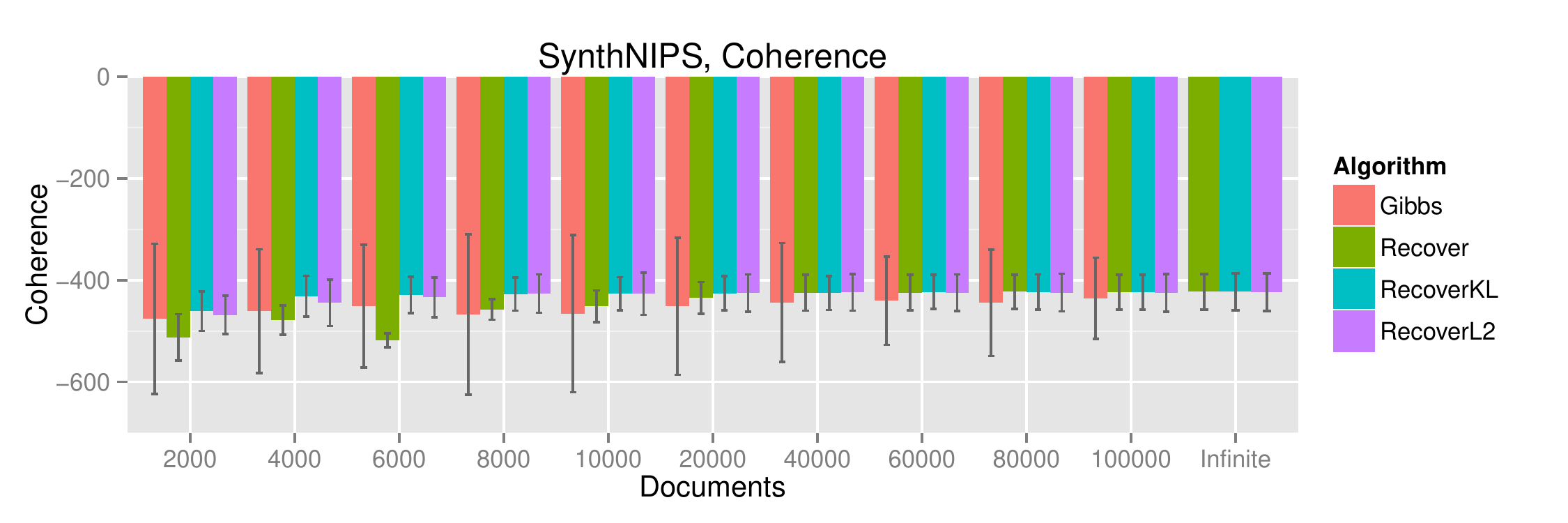}
\includegraphics[scale=0.35]{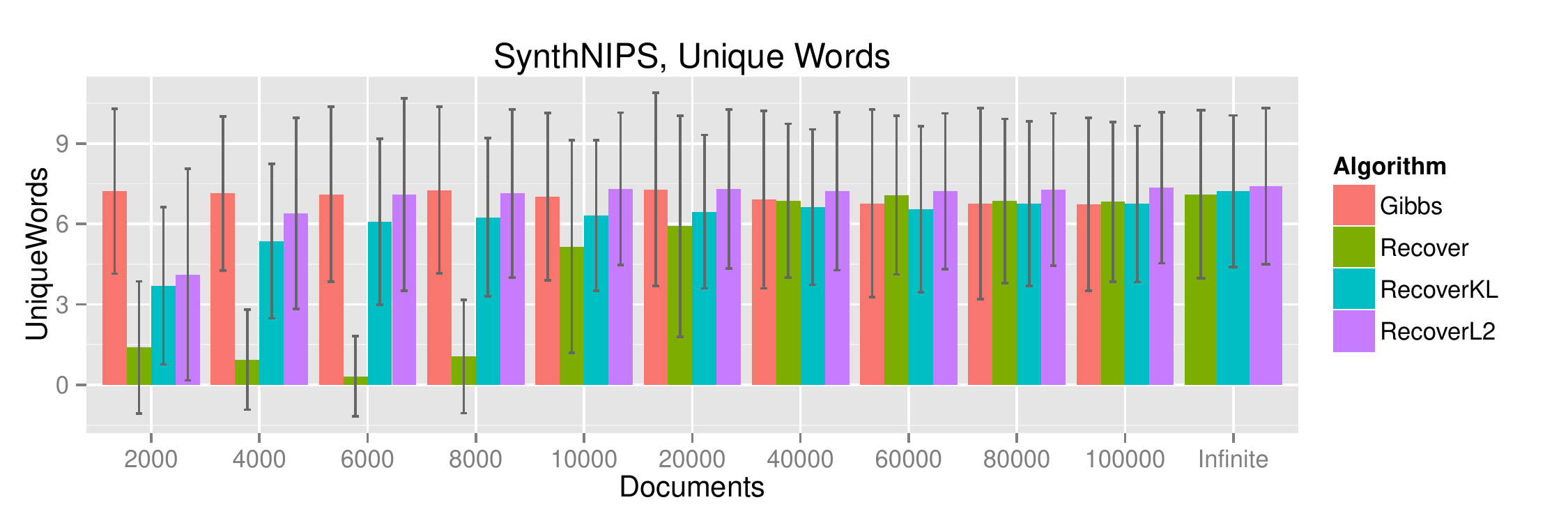}
\caption{Results for a semi-synthetic model generated from a model trained on NIPS papers with $K=100$. For $D \in \{2000, 6000, 8000\}$, Recover produces log probabilities of $-\infty$ for some held-out documents.}
\end{center}
\end{figure}

\subsection{Sample Topics}

Tables \ref{tbl:samples1}, \ref{tbl:samples2}, and \ref{tbl:samples3} show 100 topics trained on real NY Times articles using the RecoverL2 algorithm.
Each topic is followed by the most similar topic (measured by $\ell_1$ distance) from a model trained on the same documents with Gibbs sampling.
When the anchor word is among the top six words by probability it is highlighted in bold.
Note that the anchor word is frequently not the most prominent word.

\begin{table}[htdp]
\caption{Example topic pairs from NY Times sorted by $\ell_1$ distance, anchor words in bold.}
\begin{center}
\rowcolors{1}{}{gray!20}
{\tiny
\begin{tabular}{c|p{6cm}}
RecoverL2 & run inning game hit season zzz\_anaheim\_angel \\ 
Gibbs & run inning hit game ball pitch \\ 
\hline
RecoverL2 & king goal game team games season \\ 
Gibbs & point game team play season games \\ 
\hline
RecoverL2 & yard game play season team touchdown \\ 
Gibbs & yard game season team play quarterback \\ 
\hline
RecoverL2 & point game team season games play \\ 
Gibbs & point game team play season games \\ 
\hline
RecoverL2 & zzz\_laker point {\bf zzz\_kobe\_bryant} zzz\_o\_neal game team \\ 
Gibbs & point game team play season games \\ 
\hline
RecoverL2 & point game team season player {\bf zzz\_clipper} \\ 
Gibbs & point game team season play zzz\_usc \\ 
\hline
RecoverL2 & ballot election court votes vote zzz\_al\_gore \\ 
Gibbs & election ballot zzz\_florida zzz\_al\_gore votes vote \\ 
\hline
RecoverL2 & game zzz\_usc team play point season \\ 
Gibbs & point game team season play zzz\_usc \\ 
\hline
RecoverL2 & company billion companies percent million stock \\ 
Gibbs & company million percent billion analyst deal \\ 
\hline
RecoverL2 & car race team season driver point \\ 
Gibbs & race car driver racing zzz\_nascar team \\ 
\hline
RecoverL2 & zzz\_dodger season run inning right game \\ 
Gibbs & season team baseball game player yankees \\ 
\hline
RecoverL2 & palestinian zzz\_israeli zzz\_israel official attack {\bf zzz\_palestinian} \\ 
Gibbs & palestinian zzz\_israeli zzz\_israel attack zzz\_palestinian zzz\_yasser\_arafat \\ 
\hline
RecoverL2 & zzz\_tiger\_wood shot round player par play \\ 
Gibbs & zzz\_tiger\_wood shot golf tour round player \\ 
\hline
RecoverL2 & percent stock market companies fund quarter \\ 
Gibbs & percent economy market stock economic growth \\ 
\hline
RecoverL2 & zzz\_al\_gore {\bf zzz\_bill\_bradley} campaign president zzz\_george\_bush vice \\ 
Gibbs & zzz\_al\_gore zzz\_george\_bush campaign presidential republican zzz\_john\_mccain \\ 
\hline
RecoverL2 & zzz\_george\_bush {\bf zzz\_john\_mccain} campaign republican zzz\_republican voter \\ 
Gibbs & zzz\_al\_gore zzz\_george\_bush campaign presidential republican zzz\_john\_mccain \\ 
\hline
RecoverL2 & net team season point player {\bf zzz\_jason\_kidd} \\ 
Gibbs & point game team play season games \\ 
\hline
RecoverL2 & yankees run team season inning hit \\ 
Gibbs & season team baseball game player yankees \\ 
\hline
RecoverL2 & zzz\_al\_gore zzz\_george\_bush percent president campaign zzz\_bush \\ 
Gibbs & zzz\_al\_gore zzz\_george\_bush campaign presidential republican zzz\_john\_mccain \\ 
\hline
RecoverL2 & zzz\_enron company firm {\bf zzz\_arthur\_andersen} companies lawyer \\ 
Gibbs & zzz\_enron company firm accounting zzz\_arthur\_andersen financial \\ 
\hline
RecoverL2 & team play game yard season player \\ 
Gibbs & yard game season team play quarterback \\ 
\hline
RecoverL2 & film movie show director play character \\ 
Gibbs & film movie character play minutes hour \\ 
\hline
RecoverL2 & zzz\_taliban zzz\_afghanistan official zzz\_u\_s government military \\ 
Gibbs & zzz\_taliban zzz\_afghanistan zzz\_pakistan afghan zzz\_india government \\ 
\hline
RecoverL2 & palestinian zzz\_israel israeli peace zzz\_yasser\_arafat leader \\ 
Gibbs & palestinian zzz\_israel peace israeli zzz\_yasser\_arafat leader \\ 
\hline
RecoverL2 & point team game shot play zzz\_celtic \\ 
Gibbs & point game team play season games \\ 
\hline
RecoverL2 & zzz\_bush {\bf zzz\_mccain} campaign republican tax zzz\_republican \\ 
Gibbs & zzz\_al\_gore zzz\_george\_bush campaign presidential republican zzz\_john\_mccain \\ 
\hline
RecoverL2 & zzz\_met run team game hit season \\ 
Gibbs & season team baseball game player yankees \\ 
\hline
RecoverL2 & team game season play games win \\ 
Gibbs & team coach game player season football \\ 
\hline
RecoverL2 & government war {\bf zzz\_slobodan\_milosevic} official court president \\ 
Gibbs & government war country rebel leader military \\ 
\hline
RecoverL2 & game set player {\bf zzz\_pete\_sampras} play won \\ 
Gibbs & player game match team soccer play \\ 
\hline
RecoverL2 & zzz\_al\_gore campaign {\bf zzz\_bradley} president democratic zzz\_clinton \\ 
Gibbs & zzz\_al\_gore zzz\_george\_bush campaign presidential republican zzz\_john\_mccain \\ 
\hline
RecoverL2 & team zzz\_knick player season point play \\ 
Gibbs & point game team play season games \\ 
\hline
RecoverL2 & com web www information sport question \\ 
Gibbs & palm beach com statesman daily american \\ 
\end{tabular}
}
\end{center}
\label{tbl:samples1}
\end{table}%

\begin{table}[htdp]
\caption{Example topic pairs from NY Times sorted by $\ell_1$ distance, anchor words in bold.}
\begin{center}
\rowcolors{1}{}{gray!20}
{\tiny
\begin{tabular}{c|p{6cm}}
RecoverL2 & season team game coach play school \\ 
Gibbs & team coach game player season football \\ 
\hline
RecoverL2 & air shower rain wind storm front \\ 
Gibbs & water fish weather storm wind air \\ 
\hline
RecoverL2 & book film {\bf beginitalic} enditalic look movie \\ 
Gibbs & film movie character play minutes hour \\ 
\hline
RecoverL2 & zzz\_al\_gore campaign election zzz\_george\_bush zzz\_florida president \\ 
Gibbs & zzz\_al\_gore zzz\_george\_bush campaign presidential republican zzz\_john\_mccain \\ 
\hline
RecoverL2 & race won horse {\bf zzz\_kentucky\_derby} win winner \\ 
Gibbs & horse race horses winner won zzz\_kentucky\_derby \\ 
\hline
RecoverL2 & company companies {\bf zzz\_at} percent business stock \\ 
Gibbs & company companies business industry firm market \\ 
\hline
RecoverL2 & company million companies percent business customer \\ 
Gibbs & company companies business industry firm market \\ 
\hline
RecoverL2 & team coach season player jet job \\ 
Gibbs & team player million season contract agent \\ 
\hline
RecoverL2 & season team game play player zzz\_cowboy \\ 
Gibbs & yard game season team play quarterback \\ 
\hline
RecoverL2 & zzz\_pakistan zzz\_india official group attack zzz\_united\_states \\ 
Gibbs & zzz\_taliban zzz\_afghanistan zzz\_pakistan afghan zzz\_india government \\ 
\hline
RecoverL2 & show network night television zzz\_nbc program \\ 
Gibbs & film movie character play minutes hour \\ 
\hline
RecoverL2 & com information question zzz\_eastern commentary daily \\ 
Gibbs & com question information zzz\_eastern daily commentary \\ 
\hline
RecoverL2 & power plant company percent million energy \\ 
Gibbs & oil power energy gas prices plant \\ 
\hline
RecoverL2 & cell stem research zzz\_bush human patient \\ 
Gibbs & cell research human scientist stem genes \\ 
\hline
RecoverL2 & {\bf zzz\_governor\_bush} zzz\_al\_gore campaign tax president plan \\ 
Gibbs & zzz\_al\_gore zzz\_george\_bush campaign presidential republican zzz\_john\_mccain \\ 
\hline
RecoverL2 & cup minutes add tablespoon water oil \\ 
Gibbs & cup minutes add tablespoon teaspoon oil \\ 
\hline
RecoverL2 & family home book right com children \\ 
Gibbs & film movie character play minutes hour \\ 
\hline
RecoverL2 & zzz\_china chinese zzz\_united\_states {\bf zzz\_taiwan} official government \\ 
Gibbs & zzz\_china chinese zzz\_beijing zzz\_taiwan government official \\ 
\hline
RecoverL2 & death court law case lawyer zzz\_texas \\ 
Gibbs & trial death prison case lawyer prosecutor \\ 
\hline
RecoverL2 & company percent million sales business companies \\ 
Gibbs & company companies business industry firm market \\ 
\hline
RecoverL2 & dog jump show quick brown {\bf fox} \\ 
Gibbs & film movie character play minutes hour \\ 
\hline
RecoverL2 & {\bf shark} play team attack water game \\ 
Gibbs & film movie character play minutes hour \\ 
\hline
RecoverL2 & anthrax official mail letter worker attack \\ 
Gibbs & anthrax official letter mail nuclear chemical \\ 
\hline
RecoverL2 & president zzz\_clinton zzz\_white\_house zzz\_bush official zzz\_bill\_clinton \\ 
Gibbs & zzz\_bush zzz\_george\_bush president administration zzz\_white\_house zzz\_dick\_cheney \\ 
\hline
RecoverL2 & father family {\bf zzz\_elian} boy court zzz\_miami \\ 
Gibbs & zzz\_cuba zzz\_miami cuban zzz\_elian boy protest \\ 
\hline
RecoverL2 & oil prices percent million market zzz\_united\_states \\ 
Gibbs & oil power energy gas prices plant \\ 
\hline
RecoverL2 & {\bf zzz\_microsoft} company computer system window software \\ 
Gibbs & zzz\_microsoft company companies cable zzz\_at zzz\_internet \\ 
\hline
RecoverL2 & government election zzz\_mexico political zzz\_vicente\_fox president \\ 
Gibbs & election political campaign zzz\_party democratic voter \\ 
\hline
RecoverL2 & fight {\bf zzz\_mike\_tyson} round right million champion \\ 
Gibbs & fight zzz\_mike\_tyson ring fighter champion round \\ 
\hline
RecoverL2 & right law president zzz\_george\_bush zzz\_senate {\bf zzz\_john\_ashcroft} \\ 
Gibbs & election political campaign zzz\_party democratic voter \\ 
\hline
RecoverL2 & com home look found show www \\ 
Gibbs & film movie character play minutes hour \\ 
\hline
RecoverL2 & car driver race {\bf zzz\_dale\_earnhardt} racing zzz\_nascar \\ 
Gibbs & night hour room hand told morning \\ 
\hline
RecoverL2 & book women family called author woman \\ 
Gibbs & film movie character play minutes hour \\ 
\end{tabular}
}
\end{center}
\label{tbl:samples2}
\end{table}%

\begin{table}[htdp]
\caption{Example topic pairs from NY Times sorted by $\ell_1$ distance, anchor words in bold.}
\begin{center}
\rowcolors{1}{}{gray!20}
{\tiny
\begin{tabular}{c|p{6cm}}
RecoverL2 & tax bill zzz\_senate billion plan zzz\_bush \\ 
Gibbs & bill zzz\_senate zzz\_congress zzz\_house legislation zzz\_white\_house \\ 
\hline
RecoverL2 & company {\bf francisco} san com food home \\ 
Gibbs & palm beach com statesman daily american \\ 
\hline
RecoverL2 & team player season game {\bf zzz\_john\_rocker} right \\ 
Gibbs & season team baseball game player yankees \\ 
\hline
RecoverL2 & zzz\_bush official zzz\_united\_states zzz\_u\_s president zzz\_north\_korea \\ 
Gibbs & zzz\_united\_states weapon zzz\_iraq nuclear zzz\_russia zzz\_bush \\ 
\hline
RecoverL2 & zzz\_russian zzz\_russia official military war attack \\ 
Gibbs & government war country rebel leader military \\ 
\hline
RecoverL2 & wine {\bf wines} percent zzz\_new\_york com show \\ 
Gibbs & film movie character play minutes hour \\ 
\hline
RecoverL2 & police {\bf zzz\_ray\_lewis} player team case told \\ 
Gibbs & police officer gun crime shooting shot \\ 
\hline
RecoverL2 & government group political tax leader money \\ 
Gibbs & government war country rebel leader military \\ 
\hline
RecoverL2 & percent company million airline flight deal \\ 
Gibbs & flight airport passenger airline security airlines \\ 
\hline
RecoverL2 & book ages children school boy web \\ 
Gibbs & book author writer word writing read \\ 
\hline
RecoverL2 & {\bf corp} group president energy company member \\ 
Gibbs & palm beach com statesman daily american \\ 
\hline
RecoverL2 & team tour {\bf zzz\_lance\_armstrong} won race win \\ 
Gibbs & zzz\_olympic games medal gold team sport \\ 
\hline
RecoverL2 & priest church official abuse bishop sexual \\ 
Gibbs & church religious priest zzz\_god religion bishop \\ 
\hline
RecoverL2 & human drug company companies million scientist \\ 
Gibbs & scientist light science planet called space \\ 
\hline
RecoverL2 & music {\bf zzz\_napster} company song com web \\ 
Gibbs & palm beach com statesman daily american \\ 
\hline
RecoverL2 & death government case federal official {\bf zzz\_timothy\_mcveigh} \\ 
Gibbs & trial death prison case lawyer prosecutor \\ 
\hline
RecoverL2 & million shares offering public company initial \\ 
Gibbs & company million percent billion analyst deal \\ 
\hline
RecoverL2 & buy {\bf panelist} thought flavor product ounces \\ 
Gibbs & food restaurant chef dinner eat meal \\ 
\hline
RecoverL2 & school student program teacher public children \\ 
Gibbs & school student teacher children test education \\ 
\hline
RecoverL2 & security official government airport federal bill \\ 
Gibbs & flight airport passenger airline security airlines \\ 
\hline
RecoverL2 & company member credit card money mean \\ 
Gibbs & zzz\_enron company firm accounting zzz\_arthur\_andersen financial \\ 
\hline
RecoverL2 & million percent bond tax debt bill \\ 
Gibbs & million program billion money government federal \\ 
\hline
RecoverL2 & million company zzz\_new\_york business art percent \\ 
Gibbs & art artist painting museum show collection \\ 
\hline
RecoverL2 & percent million number official group black \\ 
Gibbs & palm beach com statesman daily american \\ 
\hline
RecoverL2 & company tires million car zzz\_ford percent \\ 
Gibbs & company companies business industry firm market \\ 
\hline
RecoverL2 & article zzz\_new\_york {\bf misstated} company percent com \\ 
Gibbs & palm beach com statesman daily american \\ 
\hline
RecoverL2 & company million percent companies government official \\ 
Gibbs & company companies business industry firm market \\ 
\hline
RecoverL2 & official million train car system plan \\ 
Gibbs & million program billion money government federal \\ 
\hline
RecoverL2 & {\bf test} student school look percent system \\ 
Gibbs & patient doctor cancer medical hospital surgery \\ 
\hline
RecoverL2 & con una mas dice las anos \\ 
Gibbs & fax syndicate article com information con \\ 
\hline
RecoverL2 & {\bf por} con una mas millones como \\ 
Gibbs & fax syndicate article com information con \\ 
\hline
RecoverL2 & las como {\bf zzz\_latin\_trade} articulo telefono fax \\ 
Gibbs & fax syndicate article com information con \\ 
\hline
RecoverL2 & {\bf los} con articulos telefono representantes zzz\_america\_latina \\ 
Gibbs & fax syndicate article com information con \\ 
\hline
RecoverL2 & {\bf file} sport read internet email zzz\_los\_angeles \\ 
Gibbs & web site com www mail zzz\_internet \\ 
\end{tabular}
}
\end{center}
\label{tbl:samples3}
\end{table}%

\section{Algorithmic Details}

\subsection{Generating $Q$ matrix}

For each document, let $H_d$ be the vector in $\R^V$ such that the $i$-th entry is the number of times word $i$ appears in
document $d$, $n_d$ be the length of the document and $W_d$ be the topic vector chosen according to Dirichlet distribution when the documents are generated.
Conditioned on $W_d$'s, our algorithms require the expectation of $Q$ to be $\frac{1}{M} \sum_{d=1}^M AW_dW_d^TA^T$.

In order to achieve this, similar to \citep{AFHKL}, let the normalized vector $\tilde{H}_d = \frac{H_d}{\sqrt{n_d(n_d-1)}}$ and diagonal matrix
$\hat{H}_d = \frac{\mbox{Diag}(H_d) }{n_d(n_d-1)}$. Compute the matrix
$$\tilde{H}_d \tilde{H}_d^T - \hat{H}_d = \frac{1}{n_d(n_d-1)}\sum_{i\ne j, i,j\in[n_d]} e_{z_{d,i}}e_{z_{d,j}}^T.$$
Here $z_{d,i}$ is the $i$-th word of document $d$, and $e_i \in \R^V$ is the basis vector. From the generative model,
the expectation of all terms $e_{z_{d,i}}e_{z_{d,j}}^T$ are equal to $AW_dW_d^T A^T$, hence by linearity of expectation
we know $\E[\tilde{H}_d \tilde{H}_d^T - \hat{H}_d] = AW_dW_d^T A^T.$

If we collect all the column vectors $\tilde{H}_d$ to form a large sparse matrix $\tilde{H}$, and compute the sum of all $\hat{H}_d$ to get the diagonal matrix $\hat{H}$, we know $Q = \tilde{H}\tilde{H}^T - \hat{H}$ has the desired expectation.
The running time of this step is $O(MD^2)$ where $D^2$ is the expectation of the length of the document squared. 

\subsection{Exponentiated gradient algorithm}
The optimization problem that arises in RecoverKL and RecoverL2 has the following form,
\begin{gather*}
\text{minimize } d(b, Tx)\\
\text{subject to: } x \geq 0 \text{ and } x^T{\bf1} = 1
\end{gather*}
where $d(\cdot,\cdot)$ is a Bregman divergence, $x$ is a vector of length $K$, and $T$ is a matrix of size $V \times K$.
We solve this optimization problem using the Exponentiated Gradient algorithm \citep{Kivinen95exponentiatedgradient}, described in Algorithm~\ref{alg:gradientDescent}. In our experiments we show results using both squared Euclidean distance and KL divergence for the divergence measure. Stepsizes are chosen with a line search to find an $\eta$ that satisfies the Wolfe and Armijo conditions (For details, see \citet{Nocedal2006NO}). 
We test for convergence using the KKT conditions. Writing the KKT conditions for our constrained minimization problem:
\begin{enumerate}
\item Stationarity: $\nabla_x d(b, Tx^{*}) - \vec{\lambda} + \mu {\bf 1}$ = 0
\item Primal Feasibility: $x^{*} \geq 0$, $|x^{*}|_1 = 1$
\item Dual Feasibility: $\lambda \geq 0$
\item Complementary Slackness: $\lambda_i x_i^{*} = 0$
\end{enumerate}
For every iterate of $x$ generated by Exponentiated Gradient, we set $\lambda, \mu$ to satisfy conditions 1-3. This gives the following equations: 
\begin{gather*}
\lambda = \nabla_x d(b, Tx^{*}) + \mu {\bf 1}\\
\mu = -(\nabla_x d(b, Tx^{*}))_{\min}
\end{gather*}
By construction conditions 1-3 are satisfied (note that the multiplicative update and the projection step ensure that $x$ is always primal feasible). Convergence is tested by checking whether the final KKT condition holds within some tolerance. Since $\lambda$ and $x$ are nonnegative, we check complimentary slackness by testing whether $\lambda^Tx < \epsilon$. This convergence test can also be thought of as testing the value of the primal-dual gap, since the Lagrangian function has the form: $L(x, \lambda, \mu) = d(b, Tx) - \lambda^Tx + \mu(x^T{\bf 1} - 1)$,  and $(x^T{\bf 1} - 1)$ is zero at every iteration. 

\begin{algorithm}[t]
\caption{Exponentiated Gradient}
\label{alg:gradientDescent}    
\begin{algorithmic}
\Require Matrix $T$, vector $b$, divergence measure $d(\cdot,\cdot)$, tolerance parameter $\epsilon$
\Ensure non-negative normalized vector $x$ close to $x^*$, the minimizer of $d(b, Tx)$
\State Initialize $x \gets \frac{1}{K}{\bf 1}$
\State Initialize Converged $\gets$ False
\While{not Converged}
\State $p = \nabla d(b, Tx)$
\State Choose a step size $\eta_t$
\State $x \gets x e^{-\eta_t p}$ (Gradient step)
\State $x \gets \frac{x}{|x|_1}$ (Projection onto the simplex)
\State $\mu \gets \nabla d(b, Tx)_{\min}$
\State $\lambda \gets \nabla d(b, Tx) - \mu$
\State Converged $\gets \lambda^Tx < \epsilon$ 
\EndWhile
\end{algorithmic}
\end{algorithm}

The running time of RecoverL2 is the time of solving $V$ small ($K\times K$) quadratic programs. Especially when using Exponentiated Gradient to solve the quadratic program, each word requires $O(KV)$ time for preprocessing and $O(K^2)$ per iteration. The total running time is $O(KV^2+K^2V T)$ where $T$ is the average number of iterations. The value of $T$ is about $100-1000$ depending on data sets.
\vfill

\end{document}